\newtheorem{theorem}{Theorem}
\journal{Neurocomputing}
\begin{document}

\begin{frontmatter}



\title{Robust Capped $\ell_{p}$-Norm Support Vector Ordinal Regression}


\author[mymainaddress,mysecondaryaddress]{Haorui Xiang}
\ead{haoruixiang@mail.nwpu.edu.cn}
\author[mymainaddress,mysecondaryaddress]{Zhichang Wu}
\ead{wks0774@mail.nwpu.edu.cn}
\author[mymainaddress]{Guoxu Li}
\ead{lgx2683109120@gmail.com}
\author[mymainaddress]{Rong Wang}
\ead{wangrong07@tsinghua.org.cn}
\author[mymainaddress,mysecondaryaddress]{Feiping Nie\corref{mycorrespondingauthor}}
\cortext[mycorrespondingauthor]{Corresponding author}
\ead{feipingnie@gmail.com}
\author[mysecondaryaddress]{Xuelong Li}
\ead{li@mail.nwpu.edu.cn}

\address[mymainaddress]{School of Computer Science and School of Artificial Intelligence, Optics and Electronics (iOPEN), Northwestern Polytechnical University, Xi'an 710072, Shaanxi, P. R. China.}
\address[mysecondaryaddress]{ Institute of Artificial Intelligence (TeleAI), China Telecom Corp Ltd, 31 Jinrong Street, Beijing 100033, P. R. China}


\begin{abstract}

Ordinal regression is a specialized supervised problem where the labels show an inherent order.
The order  distinguishes it from normal multi-class problem.
Support Vector Ordinal Regression, as an outstanding ordinal regression model, is widely used in many ordinal regression tasks.
However, like most supervised learning algorithms, the design of SVOR is based on the assumption that the training data are real and reliable, which is difficult to satisfy in real-world data.
In many  practical applications, outliers are frequently present in the training set, potentially leading to misguide the learning process, such that the  performance is non-optimal. 
In this paper,
we propose a novel capped $\ell_{p}$-norm loss function that is theoretically robust to both light and heavy outliers.
The capped $\ell_{p}$-norm loss can help the model detect and eliminate outliers during training process.
Adhering to this concept, we introduce a new model, Capped $\ell_{p}$-Norm Support Vector Ordinal Regression(CSVOR), that is robust to outliers.
CSVOR uses a weight matrix to detect and eliminate outliers during the training process to improve the robustness to outliers.
Moreover, 
a Re-Weighted algorithm algorithm 
which is illustrated  convergence by our theoretical results is proposed to effectively minimize the corresponding problem.
Extensive experimental results demonstrate that our model outperforms state-of-the-art(SOTA) methods, particularly in the presence of outliers.


\end{abstract}



\begin{keyword}
 Ordinal Regression 
 \sep Support Vector Ordinal Regression 
 \sep  Capped $\ell_{p}$-norm  
 \sep 
 Robust to Outlier


\end{keyword}

\end{frontmatter}


\section{Introduction}
\label{}
Ordinal regression, also known as ordinal classification, is a machine learning task that involves predicting labels with a natural order, distinguishing it from standard multi-class classification tasks \cite{gutierrez2015ordinal,cruz2014metrics,pfannschmidt2020feature}. In ordinal regression, the order of the labels in the output space corresponds to the order in the input space, allowing models to utilize the inherent order information for more accurate predictions.
For instance, when predicting ages, a 9-year-old child's label is more likely to be closer to 8-year-old than to 7-year-old. This characteristic is challenging for standard multi-class methods, as they treat labels as unrelated categories
\cite{diaz2019soft,yan2014cost}.
Ordinal regression finds applications across various domains, including age prediction 
\cite{vargas2020cumulative,niu2016ordinal,5995437,tian2014comparative}, text classification \cite{li2022ordinalclip,qian2022contrastive}, medical research \cite{burkner2019ordinal,tang2023disease}, face recognition \cite{zhu2021convolutional, shin2022moving}, and social study \cite{pratiwi2019implementing,sun2023robust}.

In recent years, ordinal regression, a form of supervised learning where the output variable represents a ranking or rating scale, has gained increasing attention, leading to the proposal of many ordinal regression models \cite{shin2022moving}.
One successful model in this domain is Support Vector Ordinal Regression with Explicit Constraints (SVOREX), which is widely utilized in ordinal regression tasks \cite{chu2005new,zhong2023ordinal}.
SVOREX employs the ordinal hinge loss function and incorporates the ordinal relationship of classes into the optimization problem through explicit inequality constraints on the threshold vector $\boldsymbol{b}$ .
 It learns $K-1$ parallel planes to separate the $k$ classes.

As the volume and diversity of available data continue to expand, the likelihood of encountering outliers, such as sensor errors and mislabeled data, also increases \cite{wang2023enhanced}. These outliers can significantly impact data analysis and decision-making processes, highlighting the importance of robust techniques   \cite{zhang2020towards}.
However,  
like most supervised learning algorithms,
 SVOREX is designed under the assumption that the training data are real and reliable, which is often challenging to satisfy in real-world datasets.
Real-world datasets often contain numerous outliers, which are data points significantly different from others in the same class or data with incorrect labels.
The presence of these outliers can lead SVOREX to find sub-optimal solutions, resulting in a significant reduction in model performance.

To tackle this challenge, we propose a Capped $\ell_{p}$-Norm Support Vector Ordinal Regression (CSVOR) model that utilizes a capped $\ell_{p}$-norm ordinal hinge loss. The capped $\ell_{p}$-norm ordinal hinge loss, unlike the ordinal hinge loss used in SVOREX, is theoretically robust against both light and heavy outliers.
Since outliers often have large residuals  \cite{nie2017multiclass,wang2022capped}, SVOREX tends to overly focus on outliers, neglecting true samples and thus significantly reducing performance. The capped $\ell_{p}$-norm mitigates this issue by imposing an upper bound on the ordinal hinge loss, eliminating larger residuals and helping the model mitigate the impact of outliers during training. Moreover, 
we also provide an intuitive explanation of why CSVOR is robust to outliers. CSVOR uses a binary matrix in each iteration to detect outliers with large residuals and then removes these outliers from the iteration to reduce their impact.
To the best of our knowledge, this is the first work that attempts to improve the robustness of SVOREX to outliers.

The capped $\ell_{p}$-norm ordinal hinge loss introduces non-convexity and increases the non-smoothness of the objective, making optimization very challenging.
To address this challenge, we introduce a Re-Weighted optimization framework to efficiently solve the proposed minimization problem.
The Re-Weighted algorithm is an efficient iterative algorithm tailored for solving certain non-convex problems.
Our theoretical results guarantee the convergence of the Re-Weighted optimization algorithm.

Our contributions can be summarized concisely in the following three aspects:
\begin{enumerate}
\item 
A novel Capped $\ell_{p}$-Norm Support Vector
Ordinal Regression(CSVOR) is proposed by 
 utilizing a capped $\ell_{p}$-Norm ordinal hinge loss function to eliminate data with large residuals during the training process.
Moreover, We explain from an intuitive perspective why CSVOR is robust to outliers.
 A  weight matrix $D$ is used to detect and
eliminate outliers  implicitly during the training process to resist the influence of outliers.
\item  A Re-weighted(RW) optimization framework is proposed to efficiently solve the minimization problem, and then it is  employed to solve the proposed capped $\ell_{p}$-norm problem.
Theoretical results ensure the convergence of the optimization algorithm.
\item
Extensive experiments on two artificial datasets and ten benchmark datasets show that our method outperforms state-of-the-art(SOTA) methods, especially in the presence of outliers.
Moreover, we conducted convergence experiments on the proposed algorithm, which showed that our algorithm can converge after a few iterations.
In addition, visualization experiments on the commonly used age estimation dataset FG-NET illustrate the potential of the proposed model in anomaly detection.

\end{enumerate}

The remainder of this paper is organized as follows. 
In Section 2, we briefly review ordinal regression and SVOR models. 
Section 3 presents the proposed CSVOR model utilizes the capped $\ell_{p}$-norm ordinal hinge loss. An Re-weighted algorithm for solving the optimization problem is introduced in Section 4.
In Section 5, we provide some theoretical analysis. 
In Section 6, we conduct extensive empirical studies on two synthetic datasets and several benchmark datasets to validate the effectiveness of our method.
Finally, Section 7 concludes this paper.
\section{Related work}
\subsection{Ordinal Regression}
Ordinal regression has been widely researched in recent years.
An excellent survey on ordinal regression is presented in \cite{gutierrez2015ordinal}, which categorizes existing models into three groups. 

The first group of methods directly transforms ordinal regression into either multi-class classification or metric regression. For example, Diaz and Amit \cite{diaz2019soft} propose a new encoding scheme that converts ordinal labels into soft probability distributions based on a distance metric. This encoding method pairs well with common classification loss functions such as cross-entropy.
Another method, the Moving Window Regression (MWR) method \cite{shin2022moving}, utilizes both local and global regressors to predict the class range of patterns separately. This approach often leads to more accurate predictions compared to other methods.

The second group, known as ordinal binary decompositions, leverages order information to break down the original ordinal regression into multiple binary classification tasks. Frank and Hall \cite{frank2001simple} propose that an ordinal regression problem with $K$ classes can be transformed into $K-1$ binary classification problems based on whether a pattern $\boldsymbol x$ is greater than $k$. In the work of Waegeman et al. \cite{waegeman2009ensemble}, different weights are applied to each sample of each binary classifier so that samples with predicted labels further away from the true label are more penalized. Perez-Orti{'z} et al.  \cite{perez2013projection} propose a new method for this decomposition. For intermediate subsets, labels are divided into three groups (less than $k$, equal to $k$, and greater than $k$). For extreme subsets, labels are divided into two categories. This incorporates sequential information into the sub-problems. A method for ordinal regression model by constructing proximal non-parallel hyperplanes is proposed based on the above decomposition method, which can obtain more flexible separate hyperplanes \cite{jiang2021non}.

The third group, referred to as threshold models, is grounded in the general concept of approximating a real-value predictor and subsequently partitioning the real line into intervals. Cumulative Link Models (CLMs) \cite{gouvert2020ordinal}, originated from a statistical background, are one of the first models designed specifically for ordinal regression problems. 
CLMs  use a special cumulative probability function to predict the probability of continuous classes. 
Vargas et al.  \cite{vargas2020cumulative} combine CLMs with the Continuous Quadratic Weighted Kappa (QWK) loss function to propose an ordinal deep network, successfully applying it to tasks such as age prediction and medical image classification.
The CORAL method \cite{shi2023deep} uses the last layer of the neural network to share weights to limit the posterior probability and ensure rank consistency.

There are also recent studies that do not fall into these three categories. For example, Liu et al. \cite{liu2020unimodal}  use unimodal constraints to reformulate the ordinal information in ordinal regression. Li et al. \cite{li2022unimodal}  introduce an efficient unimodal-concentrated loss. This loss maximizes the probability of an instance at its true value in a fully adaptive manner while ensuring a unimodal distribution.

\subsection{Support Vector Ordinal Regression(SVOR)}
Initially designed for binary classification problems, Support Vector Machines (SVMs) try to find an optimal direction to map feature vectors to function values on the real line and use a single optimization threshold to divide the real line into two regions.
Herbrich et al. \cite{shashua2002ranking}  propose the first ordinal regression algorithm based on SVM, which considers a pairwise approach and constructs a new dataset using difference vectors. 
 The work by Shashua and Levin \cite{shashua2002ranking} introduce two methods: maximizing the margin between the closest neighboring classes and maximizing the sum of margins between classes.
 Both approaches encounter two primary issues: the model is incompletely specified as the thresholds are not uniquely defined, and at the optimal solution, they may not be properly ordered, since the inequality $(b_1 \leq b_2 \leq \ldots \leq b_{K-1})$ is not included in the formulation. Consequently,
 Chu and Keerthi \cite{chu2005new,chu2007support} present two distinct reformulations of the same concept, addressing the issue of unordered thresholds in the solution.
  Li and Lin \cite{lin2012reduction} treat the ordinal regression as a special case of cost-sensitive classification and propose a reduction framework for ordinal regression. They then combine this framework and SVM to propose a new support vector ordinal regression model (REDSVM).
   In recent years, people have begun to pay attention to the issue of noise sensitivity in support vector learning for ordinal regression, where a small amount of noise can greatly impact the performance of models \cite{zhang2020towards}. 
   Zhong et al. \cite{zhong2023ordinal} combine the pinball loss with SVOR, introducing a novel support vector model for ordinal regression called PINSVOR. This approach mitigates the  sensitivity of SVOR to noise near the classification boundary.
\subsection{Briefly Review of Support Vector Learning for Ordinal Regression with  Explicit Constraints}
\begin{figure}[htbp]
    \centering
    \includegraphics[scale=0.5]{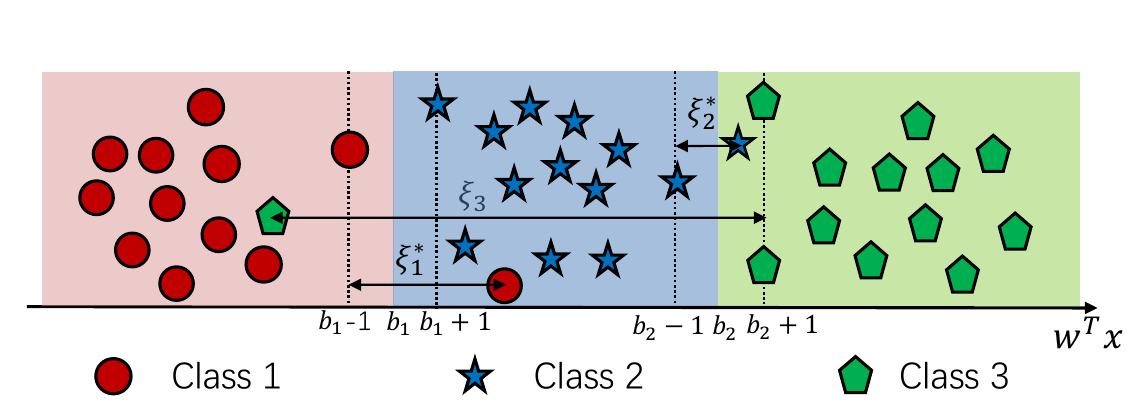} 
    \caption{An illustration of the definition of the loss $\xi$ and $\xi^*$. The samples from different classes,represented as patterns of different colors and shapes, are mapped by $\boldsymbol{w}^T\boldsymbol{x}$ onto the axis of function value.}
     \label{svorF}
    \end{figure} 
SVOREX is a widely used ordinal regression method that aims to classify data into $K$ ordered classes by finding $K-1$ parallel classification hyperplanes.
It includes a projection vector $\boldsymbol{w}$ and a threshold vector $\boldsymbol{b}=(b_0,b_1,...,b_{K})^T$ which satisfies the constraints $b_1 \le b_2 \le \dots \le b_{K-1}$ and $b_0=-\infty$, $b_K=+\infty$.
Similar to the binary classification, the projection direction maps feature vectors to values on the real line, and then the threshold vector divides the real line into $K-1$ regions, each corresponding to a class, as shown in Fig. \ref{svorF}.
For a data point $(\boldsymbol{x_i},y_i)$, its mapped value $\boldsymbol{w}^T\boldsymbol{x_i}$ should be assigned to the interval $[b_{y_i-1}+1, b_{y_i}-1]$.
If the mapped value is less than the lower bound $b_{y_i-1}+1$ is the error (denoted as $\xi(\boldsymbol{w},\boldsymbol{b}|\boldsymbol{x_i},y_i)$),  $b_{y_i-1}+1-\boldsymbol{w}^T\boldsymbol{x_i}$
and  whereas if the mapped value is greater than the upper bound, $\boldsymbol{w}^T\boldsymbol{x_i}-b_{y_i}-1$ is the error (denoted as $\xi^*(\boldsymbol{w},\boldsymbol{b}|\boldsymbol{x_i},y_i)$ ). 
The ordinal hinge loss can be written in the following form:

\begin{align}
\label{ordinalhingloss}
\xi_O(\boldsymbol{w},\boldsymbol{b}|\boldsymbol{x_i},y_i)=  \xi(\boldsymbol{w},\boldsymbol{b}|\boldsymbol{x_i},y_i)+\xi^*(\boldsymbol{w},\boldsymbol{b}|\boldsymbol{x_i},y_i),
\end{align}
where $\xi(\boldsymbol{w},\boldsymbol{b}|\boldsymbol{x_i},y_i)=\max(0,1-\boldsymbol{w}^T\boldsymbol{x_i}+b_{y_i-1})$ , $\xi^*(\boldsymbol{w},\boldsymbol{b}|\boldsymbol{x_i},y_i)=\max(0,1+\boldsymbol{w}^T\boldsymbol{x_i}-b_{y_i})$.
Similar to many SVM-based methods, SVOREX adopts the $l_2$ norm of $\boldsymbol{w}$ as the regularization term, which helps the model avoid overfitting.
The final optimization problem of SVOREX is as follows:

\begin{align}
\label{svorex}
\begin{aligned}
\min_{\boldsymbol{w},\boldsymbol{b}} & \sum_{i=1}^N\xi_O(\boldsymbol{w},\boldsymbol{b}|\boldsymbol{x_i},y_i)+\gamma||\boldsymbol{w}||_2^2\\
\text{s.t}.& \quad b_{k-1}\le b_k \quad k=1,2,\ldots K,
\end{aligned}
\end{align}
where $\gamma$ is regularization parameter.
Once the optimal values of $\boldsymbol{w}$ and $\boldsymbol{b}$ in problem (\ref{svorex}) are determined, the prediction rule is defined as follows:
\begin{align}
\label{decision ruler}
\begin{aligned}
r(\boldsymbol{x})&=\sum_{k=1}^{K-1}[[\boldsymbol{w}^T\boldsymbol{x}-b_k\ge 0]]+1,
\end{aligned}
\end{align}
 where $[[\cdot]]$ is an indicator function 
 the output is $1$ if the inner condition holds
 and $0$ otherwise.

\begin{figure}
    \centering
    \includegraphics[scale=0.3]{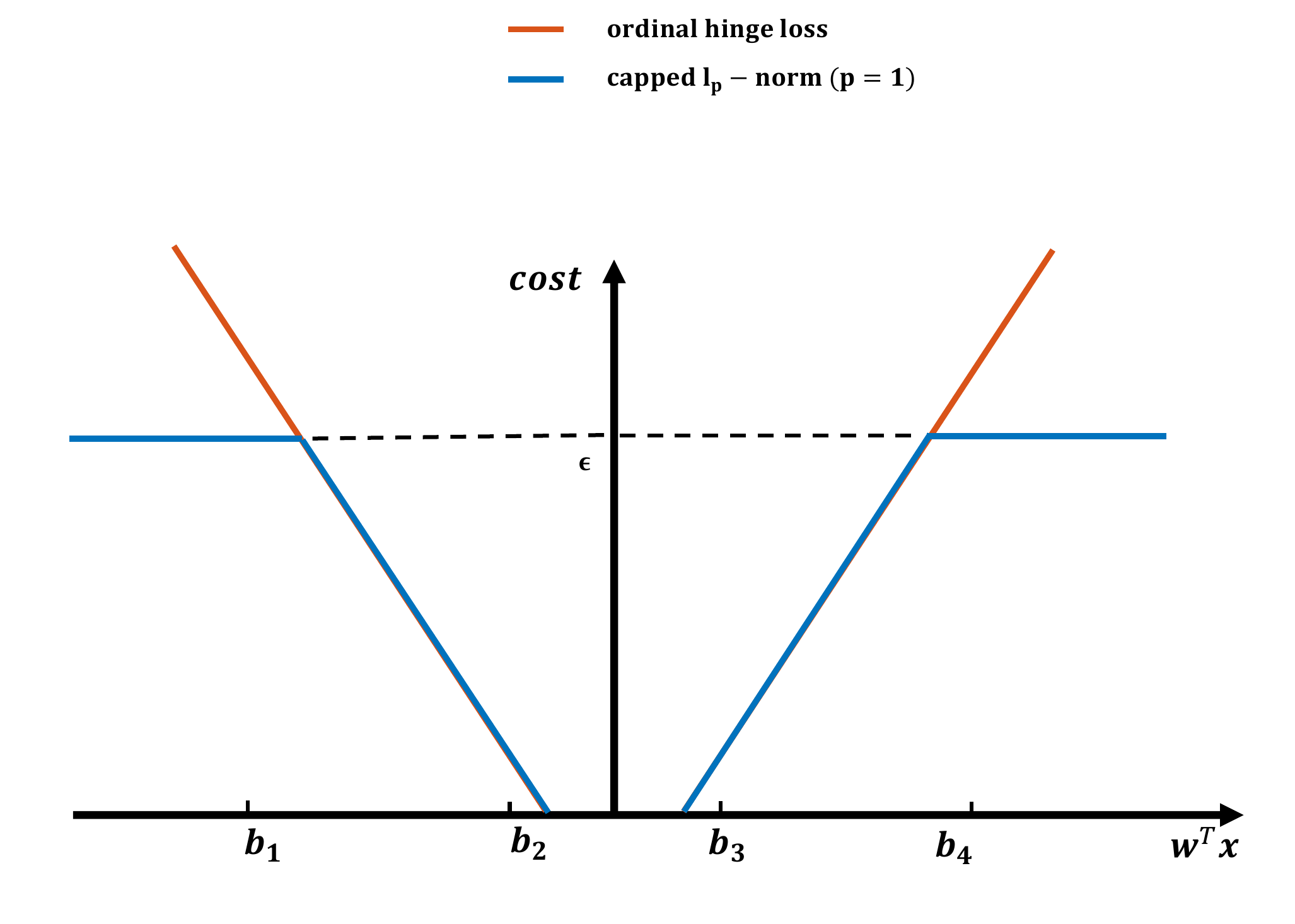}
    \caption{The plots of ordinal hinge loss and capped $\ell_{p}$ norm ordinal hinge loss.}
    \label{lossfunctionF}
\end{figure}

\section{Motivation and Proposed methodology}\label{motivation}

Despite its significant success,
the sensitivity of SVOREX to outliers can not be ignored.
Outliers usually have larger residuals and their presence affects model performance \cite{wang2022capped,nie2017multiclass}.
From Fig. \ref{lossfunctionF}, it can be observed that when a data point $\boldsymbol x$ is misclassified, the loss can grow infinitely large.  
Therefore, the ordinal hinge loss is not robust enough to data outliers.
To address this issue, we propose a capped $\ell_{p}$-norm ordinal hinge loss  as a robust and stable loss function to mitigate the impact of outliers.
We let $\boldsymbol{\xi_u}(\boldsymbol{w},\boldsymbol{b}|\boldsymbol{x_i},y_i)=(\xi(\boldsymbol{w},\boldsymbol{b}|\boldsymbol{x_i},y_i),\xi^*(\boldsymbol{w},\boldsymbol{b}|\boldsymbol{x_i},y_i))^T$.
The capped $\ell_{p}$-norm ordinal hinge loss function is defined as follows:
\begin{align}
\label{lp-norm-loss}
\xi_{c}(\boldsymbol{w},\boldsymbol{b}|\boldsymbol{x_i},y_i)=\min(||\boldsymbol{\xi_u}(\boldsymbol{w},\boldsymbol{b}|\boldsymbol{x_i},y_i)||_2^p,\epsilon),
\end{align}
where $0< p\le 2$ and $\epsilon $ determines the upper bound of the loss. 
We set $\epsilon \ge 1$, because a loss function with $\epsilon < 1$ becomes a constant before exiting the correct interval, it is impossible to distinguish between the correct interval and the wrong interval.
To illustrate this loss, we plot the loss function  for $p=1$ and $\epsilon=1$ in Fig. \ref{lossfunctionF}. 
We can observe that if data point $\boldsymbol{x}$ is misclassified, there will be a loss, but this loss will not exceed $\epsilon$.
This attribute is highly robust against data outliers because  no matter how large the residual loss is, it will never exceed $\epsilon$.
Like SVOREX, we also use the $\ell_{2}$-norm of the projected vector $\boldsymbol{w}$ as the regularization term. Therefore, we propose a new robust support vector ordinal regression method based on capped $\ell_{p}$-norm by solving the following optimization problem:

\begin{align}
\label{capped-lp-norm-svor}
\begin{aligned}
\min_{\boldsymbol{w},\boldsymbol{b}} & \sum_{i=1}^N \xi_{c}(\boldsymbol{w},\boldsymbol{b}|\boldsymbol{x_i})+\gamma||\boldsymbol{w}||_2^2\\
\text{s.t}.& \quad b_{k-1}\le b_k \quad k=1,2,\ldots K.
\end{aligned}
\end{align}

For a single hinge loss, it can be easily demonstrated that it can be written in the following form \cite{xiang2012discriminative}:
\begin{align}
\label{Convert}
\xi(\boldsymbol{w},b|\boldsymbol{x_i})=\min_{m_i\ge 0}|\boldsymbol{w}^T\boldsymbol{x_i}+b-y_i-y_im_i|.
\end{align}

Therefore, similar to Eq.(\ref{Convert}), the problem (\ref{capped-lp-norm-svor}) can be expressed in the following form:

\begin{align}
\label{optimization-problem}
\begin{aligned}
\min_{\boldsymbol{w},\boldsymbol{b},M\ge 0} & \sum_{i=1}^N \min(||(\boldsymbol{e}\boldsymbol{x_i}^T\boldsymbol{w}-\boldsymbol{\theta_i}-\boldsymbol{y^*}\circ \boldsymbol{m_i}||^p_2,\epsilon)+\gamma ||\boldsymbol{w}||_2^2
\\
\text{s.t}.& \quad b_{k-1}\le b_k \quad k=1,2,\ldots, K,
\end{aligned}
\end{align}
where $\boldsymbol{e}=(1,1)^T$, $\boldsymbol{\theta_i}=(b_{y_i-1}-1,b_{y_i}+1)^T$, $\boldsymbol{y^*}=(1,-1)^T$, and $M\in R^{2\times N}$ with the $i$-th column as $\boldsymbol{m_i}$. $M\ge 0$  means that each element of $M$ is not less than $0$.
$\circ$ denotes the hadamard product.
Due to the non-convex and non-smooth nature of the above problem, the optimization can be quite challenging.
In the next section we will introduce an efficient iterative algorithm to solve the optimization problem (\ref{optimization-problem}).

\section{Optimization Algorithm}
In this section, we will introduce an efficient optimization algorithm to optimize the problem (\ref{optimization-problem}).
\subsection{Algorithm to Solve a General Problem}
We first consider the following general problem:
\begin{align}
\label{re-weighted-optimization-problem}
\min_{x\in \Omega} h(x)+\sum_{i} v_i(g_i(x)),
\end{align}
where $v_i(x)$ is an arbitrary concave function with the domain of $g_i(x)$,
$x$ and $g_i(x)$ can be arbitrary scalars, vectors or matrices.
We propose an effective Re-weighted optimization framework to solve problem (\ref{re-weighted-optimization-problem}). 
The details of this optimization algorithm are described in Algorithm \ref{A1},
 where $f^{'}_i(g_i(x))$ denotes any supergradient of the concave function $f_i$ at point $g_i(x)$.
\begin{algorithm}[tb]
	\caption{Re-Weighted optimization framework }
    \label{A1}
    \begin{algorithmic} 
         \STATE $\textbf{Initialize} \enspace x \in \Omega \enspace \text{and} \enspace t=1$ 
     \WHILE{not converge}
     \STATE Compute the supergradient of the concave function $D_i=v_i^{'}(g_i(x))$ for each $i$.\\
     \STATE Update $x$ by the optimal solution to the problem: \\
     $\min_{x\in \Omega} h(x)+\sum_{i} Tr(D^T_ig_i(x))$
 \ENDWHILE
    \end{algorithmic}
\end{algorithm}

\subsection{Optimization Algorithm to Solve Problem (\ref{optimization-problem})}
We define the function $v_i(\cdot)$ as follows:
\begin{align}
\label{deqn_ex1a}
v_i(g_i(\boldsymbol{w},\boldsymbol{b},\boldsymbol{m_i}))=\min(g_i(\boldsymbol{w},\boldsymbol{b},\boldsymbol{m_i})^{\frac{p}{2}},\epsilon),
\end{align}
where $g_i(\boldsymbol{w},\boldsymbol{b},\boldsymbol{m_i})=||(\boldsymbol{w}^T\boldsymbol{x_i})\boldsymbol{e}+\boldsymbol{\theta_i}-\boldsymbol{y^*}\circ \boldsymbol{m_i}||^2_2$.

We can see that the problem (\ref{optimization-problem}) represents  a special case of problem (\ref{re-weighted-optimization-problem}) when $0<p\le 2$.
Therefore,
according to the second step of the Re-Weighted framework,  we solve the following optimization problem in each iteration:
\begin{align}
\label{problem_iter}
\begin{aligned}
&\min_{\substack{\boldsymbol{w},\boldsymbol{b},  M\ge 0}} \quad \sum_{i=1}^Nd_i||(\boldsymbol{e}\boldsymbol{x_i}^T\boldsymbol{w}-\boldsymbol{\theta_i}-\boldsymbol{y^*}\circ \boldsymbol{m_i}||^2_2+\gamma ||\boldsymbol{w}||_2^2\\
& \begin{array}{r@{\quad}l@{}l@{\quad}l}
s.t.& b_{k-1}\le b_k,\quad k=1,2,\ldots,K,\\\end{array} 
\end{aligned}
\end{align}
where
\begin{align}
\label{d_define}
d_i=\left\{ \begin{aligned} &\frac{p}{2}g_i(\boldsymbol{w},\boldsymbol{b},\boldsymbol{m_i})^{\frac{p-2}{2}} \quad & g_i(\boldsymbol{w},\boldsymbol{b},\boldsymbol{m_i})\le \epsilon \\ &0 & otherwise.\end{aligned} \right.
\end{align}
Note that the value of $d_i$ tends to infinity when $g_i(\boldsymbol{w},\boldsymbol{b},\boldsymbol{m_i})=0$ and $p< 2$.
We add an infinitesimal $\delta$ for the update of $d_i$ to avoid this situation.
More precisely, $d_i$ is update according to:
\begin{align}
\label{d_define1}
d_i=\left\{ \begin{aligned} &\frac{p}{2}(g_i(\boldsymbol{w},\boldsymbol{b},\boldsymbol{m_i})+\delta)^{\frac{p-2}{2}} \quad & g_i(\boldsymbol{w},\boldsymbol{b},\boldsymbol{m_i})\le \epsilon \\ &0 & otherwise.\end{aligned} \right.
\end{align}
Eq. (\ref{d_define1})  is equivalent to Eq. (\ref{d_define}) when $\delta \to 0$.

\subsubsection{update $\boldsymbol{w}$ and $\boldsymbol{b}$ when M is fixed}

When the $M$ is fixed, by setting the derivative of Eq. (\ref{problem_iter}) with respect to $\boldsymbol{w}$ to $0$, we have:
\begin{align}
\label{w_update}
\boldsymbol{w}=(2XDX^T+\gamma I)^{-1}XD\boldsymbol{e}^TH^T,
\end{align}
where $H\in \mathbb{R}^{2\times N}$ is a  matrix, each column of which is $\boldsymbol{\theta_i}+\boldsymbol{y^*}\circ \boldsymbol{m_i}$ and $I\in \mathbb{R}^{d\times d}$ is an identity matrix.
By substituting Eq. (\ref{w_update}) into problem (\ref{problem_iter}), we find that problem (\ref{problem_iter}) is a quadratic programming(QP) problem about $\boldsymbol{b}$.
There are many methods that can be used to solve this problem, such as
active set, 
conjugate gradient,
and 
interior point methods.

\subsubsection{update $M$ when  $\boldsymbol{w}$ and $\boldsymbol{b}$ are fixed}

When the $\boldsymbol{w}$ and $\boldsymbol{b}$ are fixed,
the problem (\ref{problem_iter}) can be solved by solving the following problem separately for each  $\boldsymbol{m_i}$:
\begin{align}
\label{mi_orignal}
\min_{\boldsymbol{m_i}\ge 0}||\boldsymbol{e}\boldsymbol{x_i}^T\boldsymbol{w}-\boldsymbol{\theta_i}-\boldsymbol{y^*}\circ \boldsymbol{m_i}||_2^2.
\end{align}
Note that $\boldsymbol{y^*}=(1,-1)^T$, the problem (\ref{mi_orignal}) is equivalent to the following form:
\begin{align}
\label{mi_finally}
\min_{\boldsymbol{m_i}\ge 0}||\boldsymbol{y^*}\circ(\boldsymbol{e}\boldsymbol{x_i}^T\boldsymbol{w})-\boldsymbol{y^*}\circ\boldsymbol{\theta_i}-\boldsymbol{m_i}||_2^2.
\end{align}

Since $\boldsymbol{m_i}\ge 0$, the optimal solution of problem (\ref{mi_finally}) can be easily obtained as follows:
\begin{align}
\label{M_update}
\boldsymbol{m_i}=(\boldsymbol{y^*}\circ(\boldsymbol{e}\boldsymbol{x_i}^T\boldsymbol{w})-\boldsymbol{y^*}\circ \boldsymbol{\theta_i})_+,
\end{align}
where the $i$-th element of vector $(\boldsymbol{u})_+$ is $\max(0,u_k)$.
Based on the above analysis, 
the detailed steps for solving the problem (\ref{optimization-problem}) are summarized in Algorithm \ref{A2}.

\begin{algorithm}[tb]
	\caption{Optimization algorithm to solve problem (\ref{optimization-problem})}
    \label{A2}
    \begin{algorithmic} 
    
    \STATE $\textbf{Input} \quad \text{the training dataset} \quad S=\{(\boldsymbol{x_i},y_i)\}_{i=1}^N$ \\
     \STATE $\textbf{Initialize} \quad  D=I \quad  \text{and} M=0$. 
     \WHILE{not converge}
     \STATE 1. Update $\boldsymbol{b}$ by solving the QP problem.\\
     \STATE 2. Update $\boldsymbol{w}$ by Eq.(\ref{w_update}):\\
     \quad $\boldsymbol{w}=(2XDX^T+\gamma I)^{-1}XD\boldsymbol{e}^TH^T$.
     \STATE 3. Update $M$, where the vector corresponding to column $i$ is caculated by Eq.(\ref{M_update}):\\
     \quad $\boldsymbol{m_i}=(\boldsymbol{y^*}\circ(\boldsymbol{e}\boldsymbol{x_i}^T\boldsymbol{w})-\boldsymbol{y^*}\circ \boldsymbol{\theta_i})_+$.
     \STATE 4. Update $D$  by Eq.(\ref{d_define}).
 \ENDWHILE
 
    \end{algorithmic}
\end{algorithm}

\section{THEORETICAL ANALYSIS}\label{1}
\subsection{Convergence Analysis}

In this subsection, we present the convergence analysis of Algorithm \ref{A1}. Since Algorithm \ref{A2} is a special case of Algorithm \ref{A1}, it is sufficient to  provide the convergence analysis of Algorithm \ref{A1}.

\begin{theorem}
The Algorithm \ref{A1} will reduce the objective value of problem (\ref{re-weighted-optimization-problem}) in each iteration until it  converges.
\end{theorem}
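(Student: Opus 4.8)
The plan is to recognize Algorithm \ref{A1} as a majorization--minimization scheme in which each concave term $v_i$ is replaced by its supporting hyperplane (its linearization via a supergradient) at the current iterate, and then to exploit the defining inequality of concave functions to prove monotone descent. Write $F(x)=h(x)+\sum_i v_i(g_i(x))$ for the true objective of problem (\ref{re-weighted-optimization-problem}), let $x_t$ denote the iterate at the start of iteration $t$, and set $D_i=v_i^{'}(g_i(x_t))$ as the chosen supergradient. The surrogate minimized in the second step of the algorithm is $J_t(x)=h(x)+\sum_i Tr(D_i^T g_i(x))$, and $x_{t+1}$ is taken to be its exact minimizer over $\Omega$.

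First I would invoke the supergradient inequality for each concave $v_i$: for every admissible argument $y$ in the domain of $g_i$,
\[
v_i(y)\le v_i(g_i(x_t))+Tr\big(D_i^T(y-g_i(x_t))\big).
\]
Evaluating this at $y=g_i(x_{t+1})$, summing over $i$, and adding $h(x_{t+1})$, then regrouping the linear terms, yields
\[
F(x_{t+1})\le J_t(x_{t+1})+\sum_i\big[v_i(g_i(x_t))-Tr(D_i^T g_i(x_t))\big].
\]
Next I would use the optimality of $x_{t+1}$: since it minimizes $J_t$ over $\Omega$ and $x_t\in\Omega$, we have $J_t(x_{t+1})\le J_t(x_t)=h(x_t)+\sum_i Tr(D_i^T g_i(x_t))$. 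Substituting this bound into the previous display, the inner-product terms cancel exactly and the right-hand side collapses to $h(x_t)+\sum_i v_i(g_i(x_t))=F(x_t)$. Hence $F(x_{t+1})\le F(x_t)$, which is precisely the per-iteration decrease claimed.

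For the ``until it converges'' part I would observe that in the problem of interest the objective is bounded below: each capped term $v_i(\cdot)=\min(\cdot^{p/2},\epsilon)$ is nonnegative and $h(x)=\gamma\|\boldsymbol{w}\|_2^2\ge 0$, so $F$ is bounded below by $0$; a monotone non-increasing sequence of real numbers that is bounded below converges, giving convergence of the objective values. I expect the main obstacle to be the careful handling of the non-smoothness of $v_i$ at the cap: because $v_i(\cdot)=\min(\cdot^{p/2},\epsilon)$ is concave but not differentiable at its kink, one must use a \emph{supergradient} rather than a gradient and verify that the supporting-hyperplane inequality still holds there, which it does exactly because $v_i$ is concave. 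A secondary point worth stating cleanly is that the argument requires only exact minimization of the surrogate $J_t$ (not convexity of $J_t$ in $x$, nor any structural assumption on $g_i$), since the descent is established by comparing objective values at $x_t$ and $x_{t+1}$ rather than through first-order optimality conditions; this is what lets the same proof cover Algorithm \ref{A2} as a special case.
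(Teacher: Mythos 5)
Your proposal is correct and follows essentially the same route as the paper's proof: the supergradient inequality for the concave $v_i$ combined with the optimality of the surrogate minimizer yields the monotone decrease, exactly as in Eqs.~(\ref{h_update})--(\ref{conclusion}). Your additional observation that the objective is bounded below (so the monotone sequence of objective values converges) is a small but welcome strengthening of the paper's closing remark.
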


\begin{proof}
 Suppose the updated $x$ is $x^{new}$,  in accordance with the second step in Algorithm \ref{A1}, we have:

 \begin{align}
\label{h_update}
 h(x^{new})+\sum_{i} Tr(D^T_ig_i(x^{new}))\le h(x)+\sum_{i} Tr(D^T_ig_i(x)).
 \end{align}

Because $v_i(x)$ is concave for each $i$, according to the definition of supergradient, the following inequality holds:
  \begin{align}
\label{f_update}
v(g_i(x^{new}))-v(g_i(x))\le Tr(D^T_ig_i(x^{new}))-Tr(D^T_ig_i(x)).
 \end{align}

Thus, we have:
 \begin{align}
\label{all_update}
\begin{aligned}
&\sum_i v(g_i(x^{new}))-\sum_i  Tr(D^T_ig_i(x^{new})) \\ & \le 
\sum_i  v(g_i(x))-\sum_i  Tr(D^T_ig_i(x)).
\end{aligned}
 \end{align}

Combining the  two inequalities Eq. (\ref{h_update}) and Eq. (\ref{all_update}),
the following inequality holds:

 \begin{align}
\label{conclusion}
 h(x^{new})+\sum_i v(g_i(x^{new}))\le h(x)+\sum_i  v(g_i(x)).
 \end{align}
The equality in Eq.(\ref{conclusion}) holds only when the algorithm converges. Therefore, Algorithm 1 will monotonically decrease the objective of problem (\ref{re-weighted-optimization-problem}) at each iteration until convergence.
 \end{proof}

\subsection{An intuitive explanation of why CSVOR is robust to outliers}
In this subsection, an intuitive explanation is provided to illustrate why CSVOR is robust to outliers.

Like many SVM-based methods, the projection direction corresponding to SVOREX is only determined by a small part of the training data. These points play a crucial role in the model. The projection vector of SVOREX is determined by \cite{chu2007support}:
\begin{align}
\label{w_SVOREX}
\boldsymbol{w}=\frac{1}{2\gamma}\sum_{i=1}^N(\alpha_i-\alpha_i^*)\boldsymbol{x_i}.
\end{align}
The set of data points corresponding to $\alpha_i-\alpha_i^*\ne 0$ is $A=\{\boldsymbol{x_i}|\xi_i>0 \text{ and } \xi_i^*=0\}\cup\{\boldsymbol{x_i}|\xi^*_i>0  \text{ and } \xi_i=0\}\cup\{\boldsymbol{x_i}|\boldsymbol{w}^T\boldsymbol{x_i}-b_{y_i-1}-1=0 \text{ or } \boldsymbol{w}^T\boldsymbol{x_i}-b_{y_i}+1=0 \}$ \cite{chu2007support}.
Points outside $A$ have no effect on the calculation of the projection vector 
$\boldsymbol{w}$.
Since outliers usually have larger residuals, they generally belong to set $A$ and play an important role in the calculation of the projection direction $\boldsymbol{w}$.
However, according to Eq. (\ref{w_update}), it can be seen that in each iteration, the points with $g_i(\boldsymbol{w},\boldsymbol{b},\boldsymbol{m_i})>\epsilon$ do not play a role in the calculation of $\boldsymbol{w}$.
In other words, during the training process, CSVOR uses a  weight matrix $D$ to identify and 
remove outliers  implicitly so that they do not affect the calculation of the projection direction to resist the influence of outliers.
This can be seen as an intuitive explanation for why CSVOR is robust to outliers.


\section{Experiments}
In this section, extensive experiments are conducted on synthetic and benchmark datasets to validate the effectiveness of the proposed method.
\subsection{Experiments on Synthetic Datasets}
We illustrate the robustness of our algorithm against outliers using two synthetic datasets. The first dataset consists of randomly generated two-dimensional data with three Gaussian clusters, each represented by different colors and shapes. Specifically, the red circles represent samples of the first class, the orange triangles represent samples of the second class, and the blue squares represent samples of the third class.
We compare our CSVOR algorithm with SVORIM and SVOREX, and the comparison results are presented in Fig. \ref{first synthetic}. This figure displays the projection directions of our CSVOR algorithm and the compared algorithms as the number of outliers varies.
From Fig. \ref{f1}, we observe that when there are no outliers, all three algorithms can find a relatively good projection direction. However, when we introduce five outliers into the data, we observe that as the size of the outliers increases, SVOREX and SVORIM become inefficient. In contrast, our CSVOR model consistently performs well, demonstrating its robustness to outliers.
This example illustrates that our CSVOR model can effectively avoid the influence of outliers, making it robust to outliers.

The second synthetic dataset also consists of randomly generated two-dimensional data from three Gaussian clusters. We compare our CSVOR algorithm with SVORIM and SVOREX, as shown in Fig. \ref{second synthetic}. When there are no outliers, all three algorithms perform well.
To assess the robustness of these algorithms to outliers, we added 20 and 60 outliers to the data. From Fig. \ref{s2}, it can be seen that with 20 outliers, SVORIM becomes inefficient, and although SVOREX is not inefficient, the projection direction it found has deviated from the optimal projection direction to a certain extent. With 60 outliers, SVOREX becomes completely inefficient. In contrast, our CSVOR model continues to perform well.
This example illustrates the robustness of our CSVOR model to outliers as their number increases. As the number of outliers increases, the projection directions found by SVOREX and SVORIM deviate further from the optimal projection direction, while the projection direction found by our model remains close to the optimal one.

\begin{figure*}[htbp]
  \centering
  \subfigure[non-outlier]{
    \includegraphics[width=0.3\textwidth]{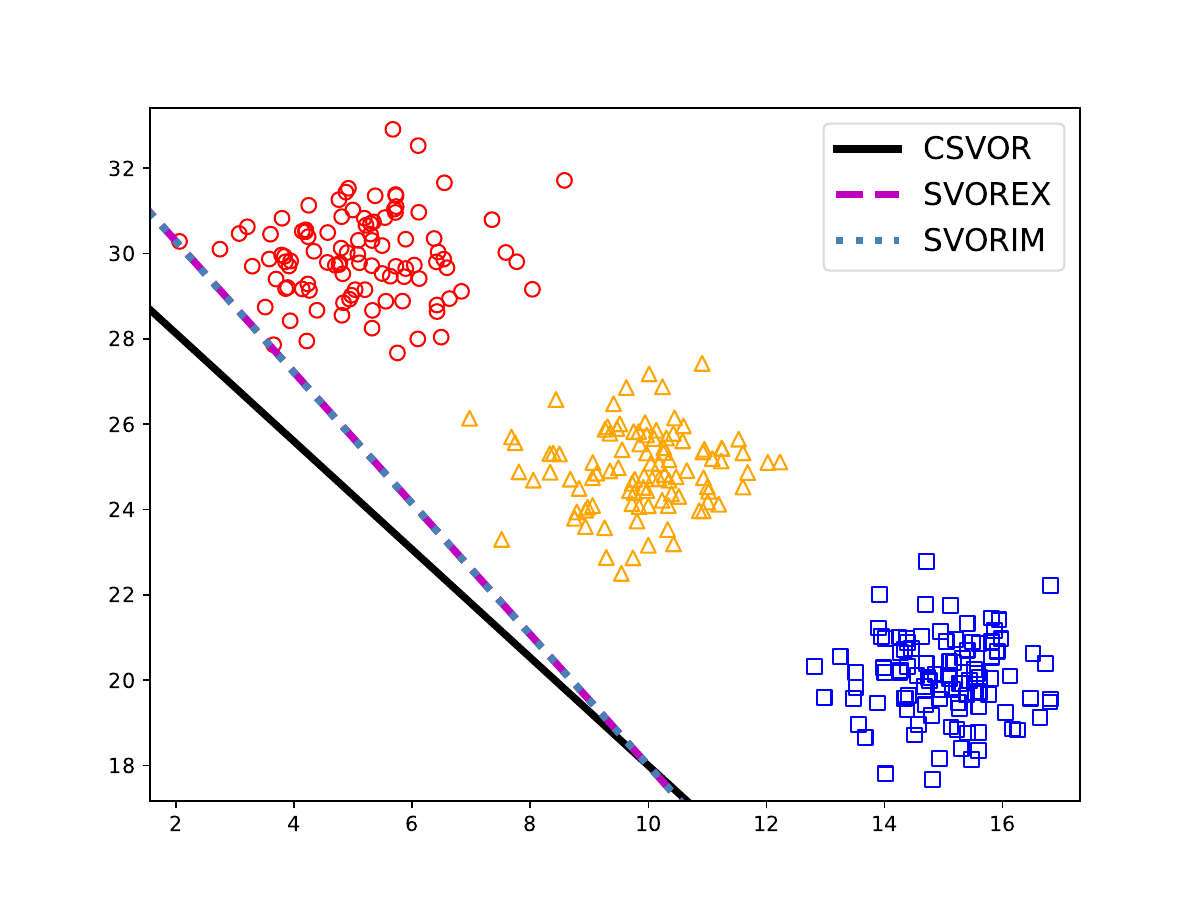}
    \label{f1}
  }
  \subfigure[The value of outliers is relatively large ]{
    \includegraphics[width=0.3\textwidth]{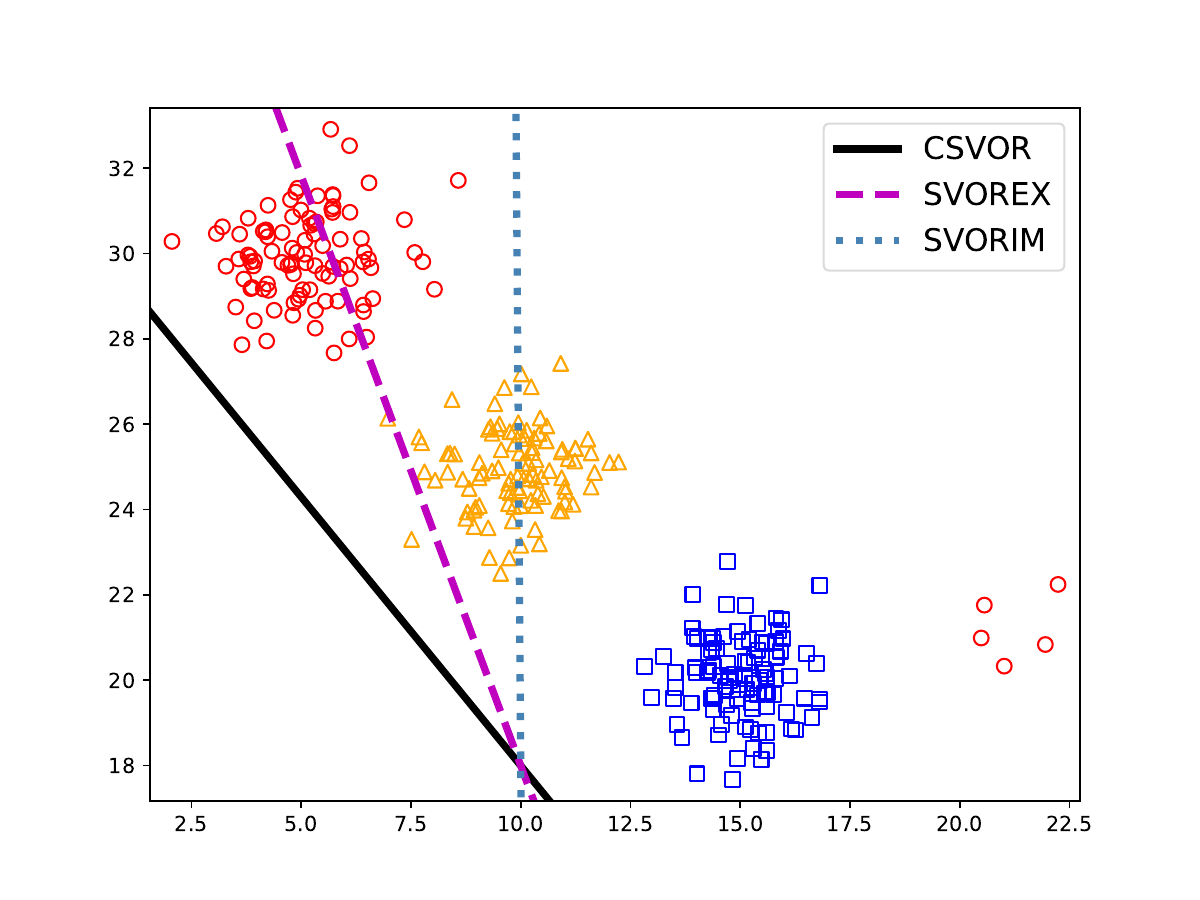}
    \label{f2}
  }
  \subfigure[The value of outliers is large ]{
    \includegraphics[width=0.3\textwidth]{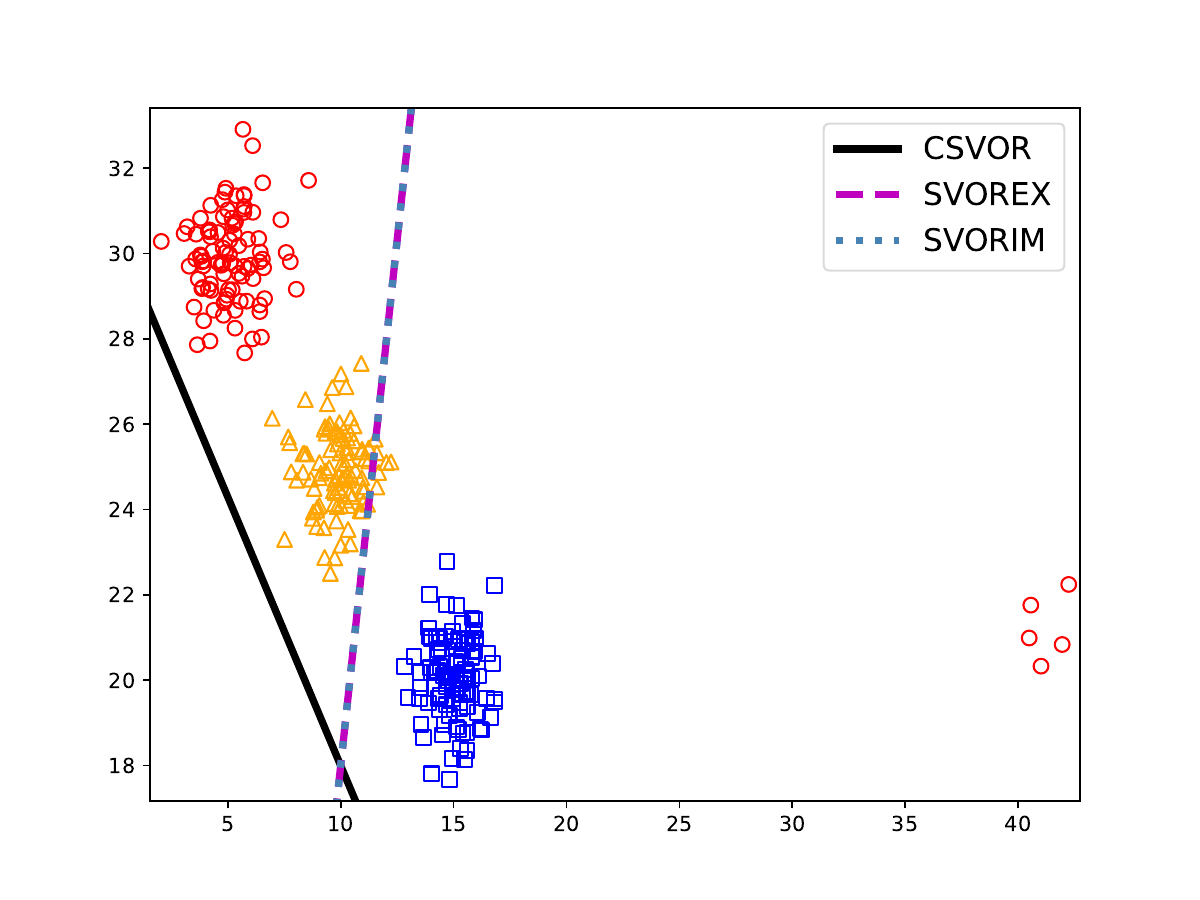}
    \label{f3}
  }
  
  \caption{Results on the first synthetic dataset.}
  \label{first synthetic}
\end{figure*}

\begin{figure*}[htbp]
  \centering
  \subfigure[non-outlier]{
    \includegraphics[width=0.3\textwidth]{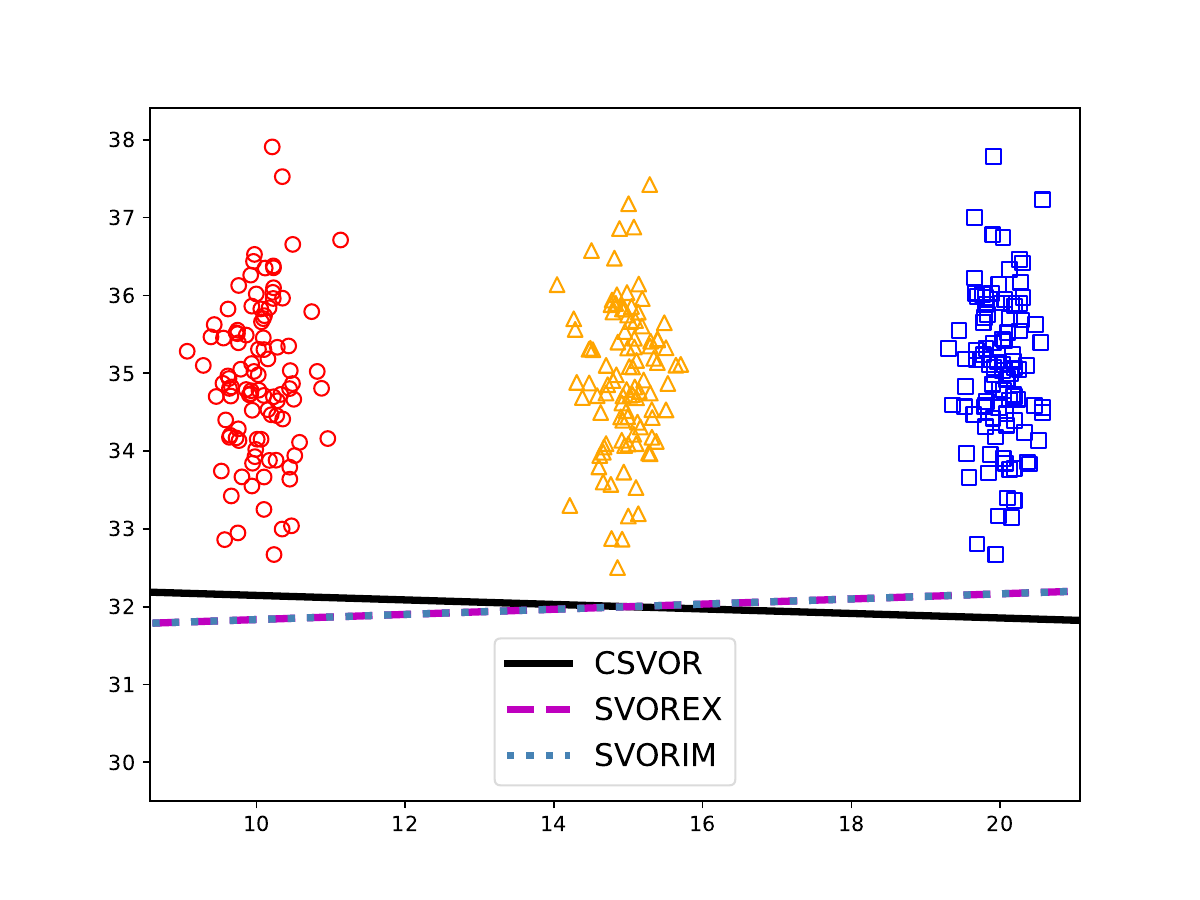}
    \label{s1}
  }
  \subfigure[The number of ouliers is 20 ]{
    \includegraphics[width=0.3\textwidth]{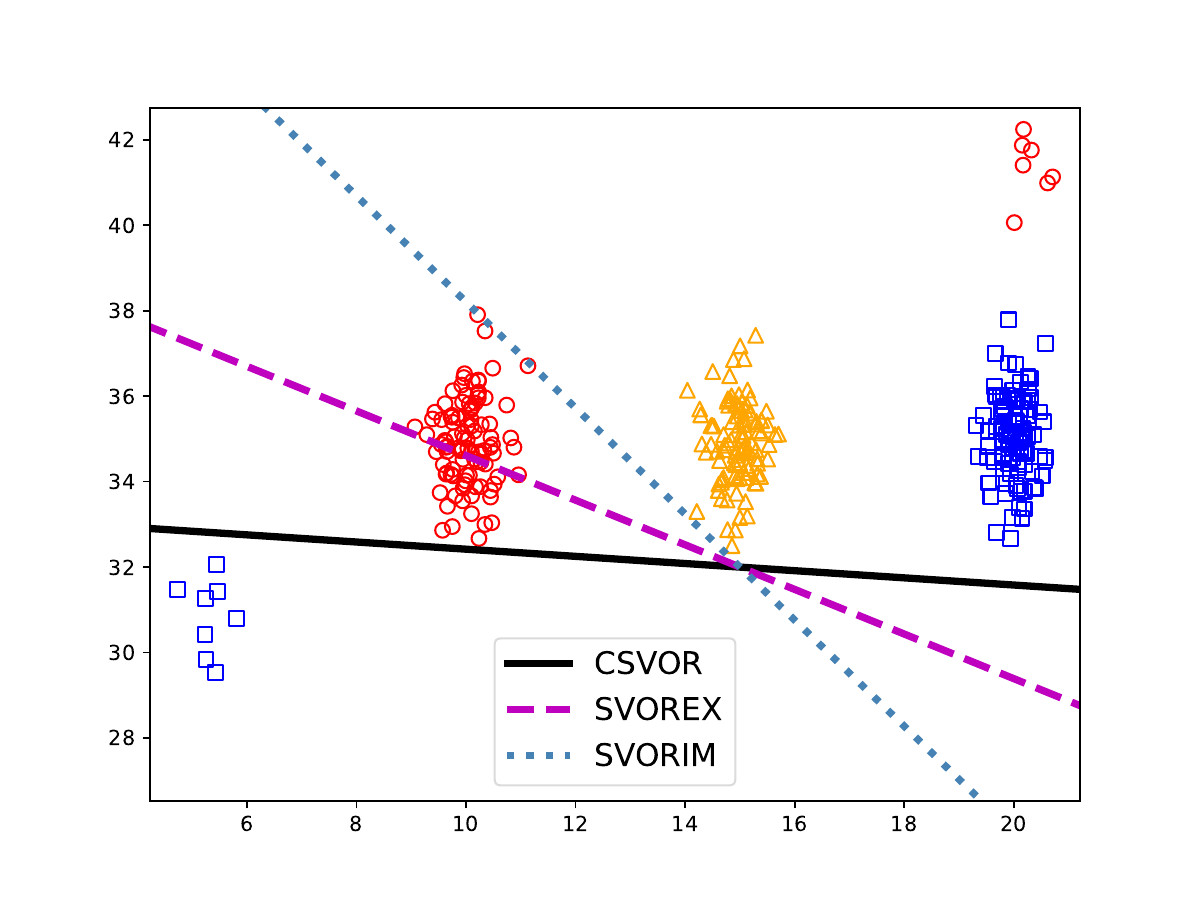}
    \label{s2}
  }
  \subfigure[The number of ouliers is 60]{
    \includegraphics[width=0.3\textwidth]{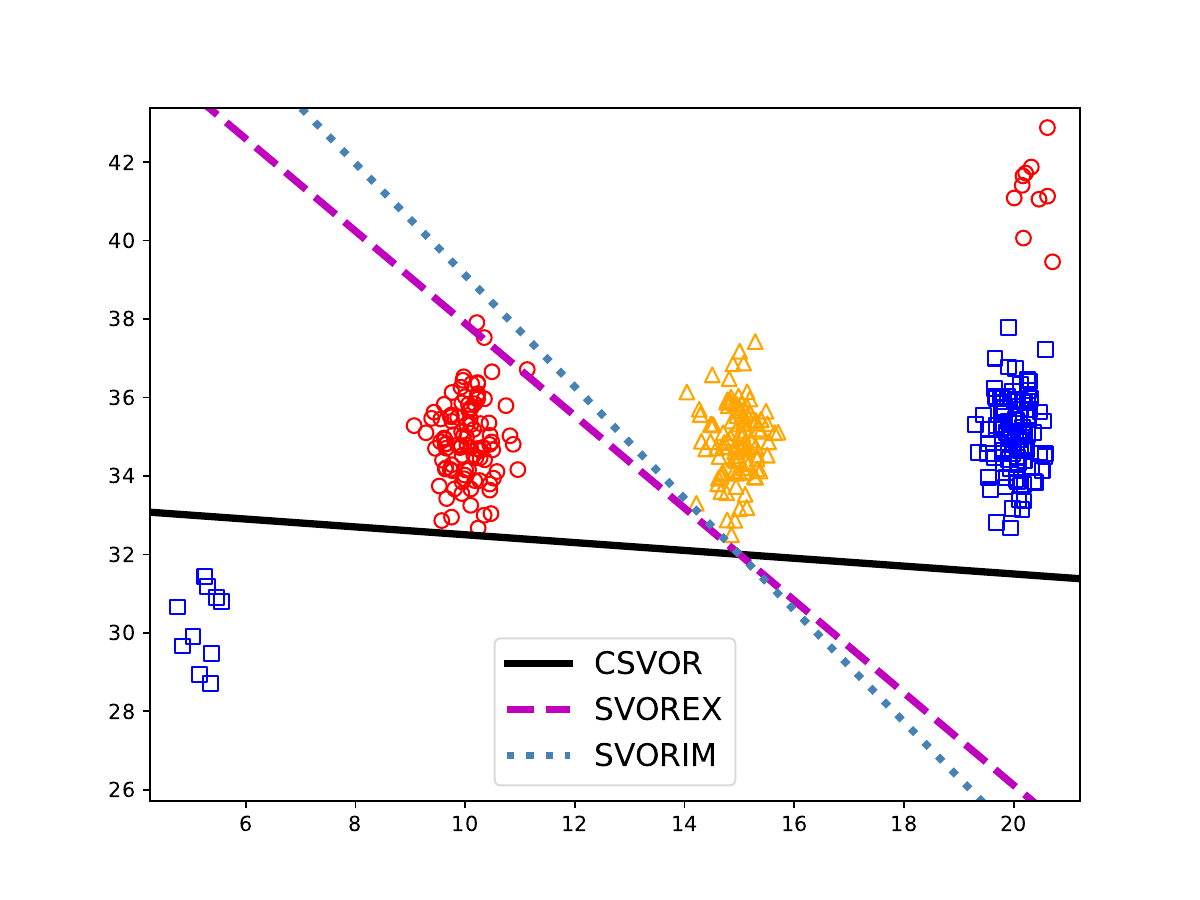}
    \label{s3}
  }
  
  \caption{Results on the second synthetic dataset.}
  \label{second synthetic}
\end{figure*}

\subsection{Experiments on Benchmark Datasets}
\subsubsection{Experimental Setting}
In this subsection, we conduct experiments on several benchmark datasets to compare the performance of CSVOR and state-of-the-art(SOTA)  methods.
The SOTA methods include SVOREX \cite{chu2007support},  SVORIM \cite{chu2007support}, PINSVOR \cite{zhong2023ordinal}, REDSVM \cite{lin2012reduction}, POM \cite{mccullagh1980regression}, KDLOR \cite{sun2009kernel}.
We use a total of 10 benchmark datasets, comprising 4 real datasets and 6 discretized regression datasets.

The 4 real datasets are commonly used UCI datasets in the context of ordinal regression \cite{sanchez2019orca}. The 6 discretized regression datasets, originally provided by Chu et al.  \cite{chu2007support}, are transformed into ordinal regression datasets by dividing the data into different classes with the same frequency.
The details of the datasets are in Table \ref{datasets}.
In order to compare the robustness to outliers, we add two different outliers to the real datasets and the discretised regression datasets respectively.

\begin{table}[htpb]
\caption{Dataset descriptions. \label{tab:table1}}
\centering
\label{datasets}
\begin{tabular}{cccc}
   \toprule
   Datasets& Instances & Attributes & Classes\\
   \midrule
Contact-Lenses & 24 & 6 & 3 \\
Squash-Unstored&52&52&3 \\
Tae &151&54&3\\
Newthyroid &215&5&3\\
Machine&209&7&5\\
Housing&506&14&5\\
Stock&700&9&5\\
Bank &8192&8&5\\
Computer &8192&12&5\\
Cal.housing &20640&8&5\\
   \bottomrule
\end{tabular}
\end{table}

We introduce label noise to the 4 real datasets to compare the robustness of our method and the baselines.
Label noise, considered a type of outlier, is the process that pollutes labels \cite{liu2022multi,frenay2013classification}.
For the real datasets, we consider 30 random stratified splits, allocating 75\% of the patterns to the training set and 25\% to the test set. Outliers are generated in the training set by randomly changing the labels of 1\%, 5\%, 10\%, 15\%, and 20\% of the samples.
Regarding the discrete regression datasets, which are transformed into ordinal regression by dividing the targets of traditional regression datasets into five different bins with equal frequency, we perform 20 random splits. The number of training and test instances follows the suggestions by Chu et al. \cite{chu2005new}. Outliers are generated by normalizing the data to [0, 1] and then randomly setting 50\% of the features of 1\%, 5\%, 10\%, 15\%, and 20\% of the data points to -1.

In terms of parameter settings, for SVOR methods (SVOREX, SVORIM, PINSVOR, and REDSVM), the regularization parameter $C$ is selected from the range $[10^{-3}, 10^{-2}, ..., 10^{3}]$.
The pinball loss parameter  $\tau$ in PINSVOR is chosen from $[0.1,0.2,\ldots, 0.9]$. The penalty coefficient $C$ in KDLOR is selected from $[10^{-1},0,10]$.
In our approach, 
the  parameters $\gamma$ and $p$ are selected from
within the specified ranges of $[0.5,0.7,1,1.5,1.7,2]$ and $[10^{-3},10^{-2},...,10^3]$, respectively.
Optimal values for all parameters are obtained using five fold cross-validation.
Additionally, we set the value of $\epsilon$ heuristically by selecting the top 10\% of data with the highest residuals in the first 5 iterations.

 The Mean Zero Error (MZE) is selected as the evaluation index, which reflects the accuracy of model prediction. The smaller the MZE on the test data set, the better the generalization performance of the ordered regression model. The calculation formula of MZE is as follows:
\begin{align}
\label{1.30}
MZE=\frac{1}{N}\sum_{i=1}^N[[y_i\ne y_i^{pred}]],
\end{align}
where $y_i$ is the true label and  $y_i^{pred}$ is  the predicted label. 
We report the performance of CSVOR and 7 baselines on 10 benchmark datasets in 
Table \ref{labelnoise} and Table \ref{outlier}. 
Fig. \ref{CDF} shows the critical difference (CD) plot of the Friedman test and Nemenyi post-hoc test at a 5\% significance level. The Friedman test with the Nemenyi post-hoc test is a frequently used non-parametric method for comparing the performance of multiple methods across a variety of datasets \cite{demvsar2006statistical,xu2017robust}. In the CD plot, a smaller average rank (abscissa) indicates better model performance. If the abscissa spacing between two methods is greater than the CD value , we consider there to be a significant difference in their performance. In Fig. \ref{CDF}, we connect the methods that show no significant difference with a red line.

\begin{table*}[htbp]
\centering
\caption{Test results for each real dataset and method(average MZE $\pm$  standard deviation ). Best results are in bold, and the second best results are underlined. (r is the ratio of label noise).}
\footnotesize
\resizebox{\textwidth}{!}{
\begin{tabular}{ccccccccc}
\toprule
Datasets & r(\%) & SVOREX & SVORIM & PINSVOR & REDSVM & POM & KDLOR & CSVOR \\ 
\midrule
\multirow{6}{*}{Contact-Lenses}
&0& \underline{$0.294\pm 0.113$}&
$0.339\pm 0.108$&$0.300\pm0.134$&$0.339\pm 0.120$&$0.394\pm 0.178$&	$0.350\pm 0.160$	& $\bf{0.222\pm 0.126}$\\
& 1   & $0.322 \pm 0.107$ & $0.356\pm0.129$ & \underline{$0.311\pm0.137$} & $0.367\pm0.068$ & $0.406\pm0.143$ & $0.344\pm0.190 $& $ \bf {0.278\pm0.126} $\\
                    & 5   & $0.350\pm0.166 $& $0.367\pm0.068$ & \underline{$0.333\pm0.152$} & $0.356\pm0.068$ & $0.406\pm0.189$ & $0.483\pm0.202$ &  $\bf{0.300\pm0.179}$ \\
                    & 10  & $0.361\pm0.077 $& $0.422\pm0.179$ & \underline{$0.344\pm0.115$} & $0.367\pm0.068$ & $0.444\pm0.166 $& $0.500\pm 0.196$ &  $\bf{0.333\pm 0.214}$ \\
                    & 15  & $0.367\pm 0.077$ & $0.439\pm0.198$ & $0.367\pm0.177$ & \underline{$0.367\pm0.068$} & $0.433\pm0.166$ & $0.456\pm0.169$ &  $\bf{0.356\pm0.206}$ \\
                    & 20  & $0.378\pm0.075$ & $0.472\pm0.219$ & \bf $0.367\pm0.132$ & $0.406\pm0.150$ & $0.444\pm 0.177 $& $0.517\pm0.187 $&  $\bf{0.367\pm0.159}$ \\
\midrule
\multirow{6}{*}{Squash-Unstored} 
&0&$0.226\pm 0.107$&$0.226\pm 0.107$&$\bf {0.208\pm 0.109}$&		$0.282\pm0.135$&$0.538\pm0.000$&$0.238\pm0.128$&\underline{$0.218\pm0.123$}\\
& 1   & $\bf {0.221\pm0.110}$ &$0.223\pm 0.109$ & $0.233\pm0.138$ & $0.346\pm0.112$ & $0.538\pm0.000$ & $0.300\pm0.133$ & \underline{ $0.221\pm0.112$} \\
                    & 5   & \underline{$0.236\pm0.103$} & $0.238\pm0.126 $& $0.244\pm0.125$ & $0.313\pm0.132$ & $0.538\pm0.000$ & $0.303\pm0.138$ & $\bf {0.223\pm0.095}$ \\
                    & 10  & \underline{$0.228\pm0.103$} & $0.236\pm0.116$ & $0.295\pm0.134$ & $0.326\pm0.135$ & $0.538\pm0.000 $& $0.369\pm0.143$ & $\bf 0.215\pm0.116$ \\
                    & 15  & \underline{$0.233\pm0.117$} & $0.267\pm0.116$ & $0.290\pm0.138$ & $0.349\pm0.145$ & $0.538\pm0.000$ & $0.472\pm0.104$ & $\bf {0.226\pm0.120}$ \\
                    & 20  & \underline{$0.246\pm0.102$} & $0.308\pm0.137$ &$ 0.326\pm0.155$ & $0.446\pm0.167$ & $0.538\pm0.000$ & $0.474\pm0.162$ & $\bf {0.244\pm0.102}$ \\
\midrule
\multirow{6}{*}{Tae} 
&0& \underline{$0.443\pm 0.053$}&	$0.443\pm 0.056$&		$0.468\pm 0.098$&$0.462\pm 0.077$&$0.642\pm 0.086$&$0.497\pm 0.080$	
&$\bf {0.432\pm 0.058}$\\
& 1   & \underline{$0.451\pm 0.059$} & $0.454\pm 0.057 $& $0.470\pm 0.087$ & $0.458\pm 0.067$ & $0.646\pm 0.078$ & $0.504\pm 0.075$ & $\bf {0.439\pm 0.057}$ \\
                    & 5   & \underline{$0.458\pm 0.062$} & $0.466\pm 0.065$ & $0.470\pm 0.083$ & $0.485\pm 0.071$ & $0.645\pm 0.078$ & $0.515\pm 0.080$ &$ \bf {0.444\pm 0.078}$ \\
                    & 10  & $0.500\pm 0.076$ & $0.507\pm 0.080$ & \bf{$0.481\pm 0.081$} & $0.529\pm 0.083$ & $0.649\pm 0.065$ & $0.547\pm 0.090$ & \underline{$0.486\pm 0.086$} \\
                    & 15  & $0.489\pm 0.070$ & $0.490\pm 0.070$& \underline{$0.468\pm 0.092$} & $0.514\pm 0.082$ & $0.649\pm 0.066$ & $0.527\pm 0.095$ & $\bf 0.469\pm 0.073$ \\
                    & 20  & $0.523\pm 0.060$ & $0.529\pm 0.057$ & $0.492\pm 0.086$ & $0.547\pm 0.073$ &$ 0.651\pm 0.059$ & $0.570\pm 0.089$ & $\bf  { 0.518\pm 0.056} $\\
\midrule

\multirow{6}{*}{Newthyroid}
&0&\underline{$0.031\pm 0.023$}	&\underline{$0.031\pm 0.023$}&$0.052\pm 0.023$&$\bf{0.027\pm 0.019}$&$0.035\pm 0.023$&$0.047\pm 0.026$ &$0.037\pm 0.031$\\
& 1   & $\bf 0.031\pm 0.025$ & $0.036\pm 0.030$ & $0.054\pm 0.033$ & \underline{$0.033\pm 0.024$} & $0.036\pm 0.026$ & $0.049\pm 0.027 $&  $0.041\pm 0.027$ \\
                    & 5   & \underline{ $0.043\pm 0.033$} & $0.056\pm 0.042 $& $0.051\pm 0.033$ & $0.054\pm 0.037$ & $0.054\pm 0.035$ & $\bf 0.040\pm 0.024$ &  $0.043\pm 0.028$ \\
                    & 10  & $0.067\pm 0.036$& $0.082\pm 0.037$ & \underline{$0.049\pm 0.025$} & $0.075\pm 0.037$ & $0.077\pm 0.040$& $0.049\pm 0.047$ & $\bf{0.038\pm 0.021}$ \\
                    & 15  &$0.086\pm 0.043$ & $0.104\pm 0.052$ & \underline{$0.049\pm 0.030$} & $0.123\pm 0.049 $& $0.106\pm 0.052$ & $0.077\pm 0.053$ & $\bf{0.048\pm 0.026}$ \\
                    & 20  & $0.111\pm 0.037$ & $0.132\pm 0.044$ & \underline{$0.049\pm 0.028$} & $0.169\pm 0.057$ & $0.138\pm 0.046$ & $0.104\pm 0.081$ &  $\bf{0.038\pm 0.023}$ \\
\bottomrule
\end{tabular}}
\label{labelnoise}
\end{table*}

\begin{table*}[htbp]
\centering
\caption{Test results for each discretised regression dataset and method(average MZE $\pm$  standard deviation ). Best results are in bold, and the second best results are underlined. (r is the ratio of data outliers).}
\footnotesize
\resizebox{\textwidth}{!}{
\begin{tabular}{ccccccccc}
\toprule
Datasets & r(\%) & SVOREX & SVORIM & PINSVOR & REDSVM & POM & KDLOR & CSVOR \\ 
\midrule
\multirow{6}{*}{Machine} 
&0&
$0.406\pm 0.058$&  $0.405\pm 0.060$&$0.399 \pm 0.076$&\underline{$0.391\pm 0.068$}&	$0.397\pm 0.068$&$0.428\pm 0.064$& $\bf{0.388\pm 0.067}$\\
& 1   &$ 0.407\pm0.063$ & $0.418\pm0.066$ & $0.406\pm0.061$ & \underline{$0.402\pm0.070$} & $0.415\pm0.062$ & $0.458\pm0.063$ & $\bf {0.393\pm0.060}$ \\
                    & 5   & \underline{$0.418\pm0.056$}& $0.449\pm 0.071$ & $0.425\pm 0.055$ & $0.419\pm 0.066$ & $0.442\pm 0.055 $& $0.471\pm 0.062$ & $\bf{0.413\pm 0.063 }$ \\
                    & 10  & \underline{$0.436\pm 0.064$} & $0.524\pm 0.061$ & $0.457\pm 0.071$ & $0.479\pm 0.078$ & $0.455\pm 0.064$ & $0.498\pm 0.056$ & $\bf{0.426\pm 0.054}$ \\
                    & 15  & $0.484\pm 0.086$ & $0.600\pm 0.103$ & $0.463\pm 0.074$ & $0.508\pm 0.096$ & \underline{$0.460\pm 0.088$} & $0.494\pm 0.079$ & $\bf {0.436\pm 0.067}$ \\
                    & 20  & $0.531\pm 0.081$ & $0.649\pm 0.103 $& $0.487\pm 0.072$&$ 0.581\pm 0.110$ & \underline{$0.460\pm 0.049$} & $0.503\pm 0.083$ &$ \bf{0.436\pm 0.062}$ \\
\midrule
\multirow{6}{*}{Housing} &0&$0.350\pm 0.019$	&$0.357\pm 0.019$	&$\bf{0.285\pm 0.033}$&$0.356\pm 0.024$&	$0.353\pm 0.020$&		$0.467\pm 0.040$&\underline{$0.323\pm 0.033$}\\
& 1   & $0.356\pm 0.028$ & $0.366\pm 0.030$ & $0.383\pm 0.029$ & $0.367\pm 0.027$ & \underline{$0.355\pm 0.062$} & $0.456\pm 0.039$ & $\bf {0.344\pm 0.033}$ \\
                    & 5   & $0.371\pm 0.037$ & $0.401\pm 0.043$ & $0.410\pm 0.039$ & $0.396\pm 0.041$ & \underline{$0.386\pm 0.055$} & $0.493\pm 0.062$ & $\bf {0.351\pm 0.040}$ \\
                    & 10  & \underline{$0.401\pm 0.045$} & $0.449\pm 0.045$ & $0.443\pm 0.048$ & $0.438\pm 0.044$ & $0.421\pm 0.064$ & $0.507\pm 0.041$ &  $\bf{0.381\pm 0.044}$ \\
                    & 15  & $0.455\pm 0.028$ & $0.500\pm 0.046$ & $0.454\pm 0.028$ & $0.495\pm 0.047$ & \underline{$0.454\pm 0.088$} & $0.505\pm 0.042$ & $\bf{ 0.409\pm 0.046}$ \\
                    & 20  & $0.485\pm 0.044$ & $0.535\pm 0.047$ & \underline{$0.466\pm 0.034$} & $0.526\pm 0.051$ & $0.477\pm 0.049$ & $0.525\pm 0.046$ & $\bf{0.425\pm 0.047}$ \\
\midrule
\multirow{6}{*}{Stock} 
&0&\underline{$0.357\pm 0.015$}&$0.367\pm 0.017$&$0.358\pm 0.095$&	$0.367\pm 0.018$&$0.365\pm 0.019$&$0.413\pm 0.022$&$\bf{0.323\pm 0.017}$	\\
& 1   & \underline{$0.363\pm 0.019$} & $0.390\pm 0.022$ & $0.410\pm 0.021$ & $0.390\pm 0.021$ & $0.384\pm 0.023$ & $0.475\pm 0.036 $& $\bf {0.346\pm 0.021}$ \\
                    & 5   & \underline{$0.408\pm 0.025$} & $0.412\pm 0.033$ & $0.438\pm 0.042$ & $0.409\pm 0.025$ & $0.445\pm 0.028$ & $0.501\pm 0.041$ & $\bf {0.365\pm 0.019}$ \\
                    & 10  & \underline{$0.439\pm 0.041$} & $0.456\pm 0.033 $& $0.473\pm 0.030$ & $0.456\pm 0.050$& $0.473\pm 0.028$ & $0.515\pm 0.033$ & $\bf{0.390\pm 0.030}$ \\
                    & 15  & \underline{$0.476\pm 0.026$} & $0.541\pm 0.074$ & $0.500\pm 0.031$ & $0.557\pm 0.086$ & $0.494\pm 0.028$ & $0.532\pm 0.032 $& $\bf{0.428\pm 0.047}$ \\
                    & 20  & \underline{$0.500\pm 0.030$} & $0.632\pm 0.063$ & $0.506\pm 0.047$& $0.681\pm 0.055$ & $0.503\pm 0.034$ & $0.531\pm 0.043$ & $\bf {0.455\pm 0.057}$ \\
\midrule
\multirow{6}{*}{Bank}
&0&\underline{$0.278\pm 0.026$}&\underline{$0.278\pm 0.026$}& $0.376\pm 0.046$&$0.280\pm 0.033$&$\bf {0.277\pm 0.025}$&$0.317\pm 0.064$&$0.284\pm 0.109$
\\
& 1   & \underline{$0.329\pm 0.108$} &$ 0.360\pm 0.158$ & $0.378\pm 0.078$ & $0.341\pm 0.141$ & $0.340\pm 0.129$ & $0.465\pm 0.140$ & $\bf{0.307\pm 0.109}$ \\
                    & 5   & \underline{$0.514\pm 0.109$} & $0.575\pm 0.158$ & $0.543\pm 0.091$ & $0.532\pm 0.098$ & $0.534\pm 0.095$ & $0.661\pm 0.061$& $\bf {0.399\pm 0.147}$ \\
                    & 10  & $0.616\pm 0.109$ & $0.627\pm 0.077$ & $0.609\pm 0.097$  & $\bf{0.599\pm 0.149}$& $0.602\pm 0.134$ &$ 0.675\pm 0.059$ & \underline{$0.604\pm 0.145$} \\
                    & 15  &  $0.676\pm 0.071$ & $0.700\pm 0.067$ & $0.671\pm 0.060$ & $0.676\pm 0.085$ & \underline{$0.667\pm 0.065$} & $0.700\pm 0.041$ & $\bf {0.621\pm0.155} $\\
                    & 20  & $0.727\pm 0.039$ & $0.749\pm 0.049$ & $\bf 0.695\pm 0.066$ & $0.747\pm 0.057$ & $0.713\pm 0.045$ & $0.727\pm 0.039$ & \underline{$0.709\pm 0.113$} \\
 \midrule
 \multirow{6}{*}{Computer} 
    &0 & $0.397\pm 0.016$ & $0.397\pm 0.015$ & $0.411\pm 0.029$  & \underline{$0.384\pm 0.018$} & $\bf {0.378\pm 0.014}$ & $0.434\pm0.021 $&  $0.395\pm 0.018 $\\
    &1  & $0.412\pm 0.020$ & $0.413\pm 0.018$ & $0.433\pm 0.028 $& \underline{$0.399\pm 0.022$} & $\bf {0.395\pm 0.023}$ & $0.448\pm 0.025$&   $0.406\pm 0.020$ \\
    &2  & $0.442\pm 0.026$ & $0.447\pm 0.030$ & $0.478\pm 0.032$ & $\bf{ 0.431\pm 0.024}$ & \underline{$0.432\pm0.029$} & $0.478\pm0.027$&  $0.437\pm 0.020$ \\
    &3  &$ 0.492\pm 0.041$ & $0.500\pm 0.039$ & $0.515\pm 0.039$ & $0.482\pm 0.037$ & \underline{$0.481\pm 0.036$} & $0.511\pm 0.026$ &$ \bf {0.460\pm 0.032}$\\
    &4  & $0.541\pm0.054$ & $0.566\pm 0.060$ &$ 0.535\pm 0.041$ & $0.541\pm 0.047$ & \underline{$0.521\pm0.050$} & $0.550\pm 0.051$ & $\bf {0.491\pm 0.048}$\\
    &5  & $0.565\pm 0.043$ & $0.608\pm 0.055$ & $0.547\pm 0.023$ & $0.581\pm 0.053 $& \underline{$0.541\pm 0.034$} &$ 0.553\pm 0.040$& $\bf {0.524\pm 0.035}$ \\

\midrule
\multirow{6}{*}{Cal.housing} 
    &0  & $0.502\pm0.012$ & $0.507\pm 0.011$ & $0.502\pm 0.019$& \underline{$0.491\pm 0.010$} & $\bf {0.488\pm 0.010}$ & $0.505\pm 0.013$ & $0.491\pm 0.016$\\
    &1  & $0.519\pm 0.025 $& $0.526\pm 0.026$ & $0.517\pm 0.026$ &\underline{$0.512\pm 0.028$} & $\bf{0.506\pm 0.024}$ & $0.523\pm 0.021$ & $0.513\pm 0.030$\\
    &2 & $0.562\pm 0.032$ & $0.603\pm 0.060$ & $0.553\pm 0.025$ & $0.581\pm 0.063$ & \underline{$0.547\pm 0.030$} & $0.597\pm 0.049$& $\bf{0.539\pm0.028}$ \\
    &3  & $0.613\pm 0.043$ & $0.669\pm 0.065$ & \underline{$0.573\pm 0.029$} & $0.648\pm0.074$ & $0.578\pm 0.031$ & $0.650\pm 0.059$& $\bf {0.552\pm 0.049}$\\
    &4  & $0.687\pm 0.058$ & $0.747\pm 0.052$ & \underline{$0.598\pm 0.036$} & $0.746\pm 0.060$ & $0.615\pm 0.042$ & $0.660\pm 0.047$ & $\bf{ 0.576\pm 0.049}$\\
    &5  & $0.729\pm 0.052$ & $0.790\pm 0.022$ & \underline{$0.619\pm 0.040$} & $0.792\pm 0.021$ & $0.638\pm 0.032 $& $0.655\pm 0.057$ & $\bf {0.605\pm 0.053}$\\
\bottomrule
\end{tabular}}
\label{outlier}
\end{table*}

\subsubsection{Results Analysis}
First, our method exhibits optimal or sub-optimal performance on most datasets with or without outliers, especially on Contact-Lenses and Machine datasets
, as shown in Table \ref{labelnoise} and Table \ref{outlier}.
Moreover, as shown in Fig. \ref{CDF}, the average rank of our method is first among all methods. We also observe that as the number of outliers increases, the average ranking of our method shows an upward trend. This demonstrates that our method outperforms the 7 baselines and is more robust against the influence of outliers.
Secondly, as the number of outliers increases, the generalization performance of the model shows a downward trend due to the presence of outliers in the training set.
However, our method exhibits slower degradation in generalization performance as the number of outliers increases, indicating its resilience to outlier influence.
Lastly, the  performance of PINSVOR is unsatisfactory because it primarily addresses noise near the classification boundary while disregarding the impact of outliers.

\begin{figure*}[htbp]
  \centering
  \subfigure[0\% outliers]{
    \includegraphics[width=0.3\textwidth]{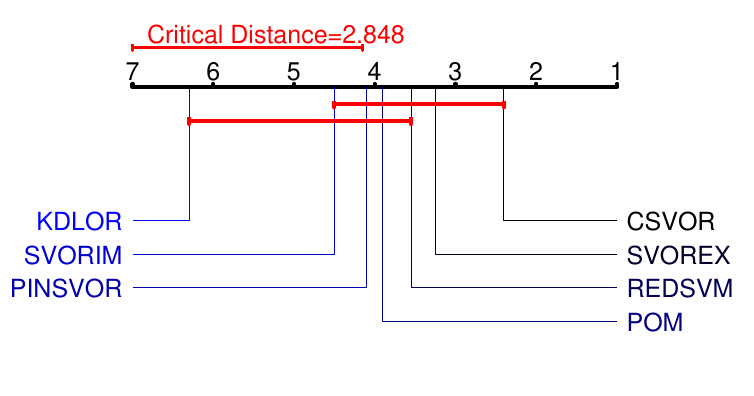}
 }
  \subfigure[1\% outliers]{
    \includegraphics[width=0.3\textwidth]{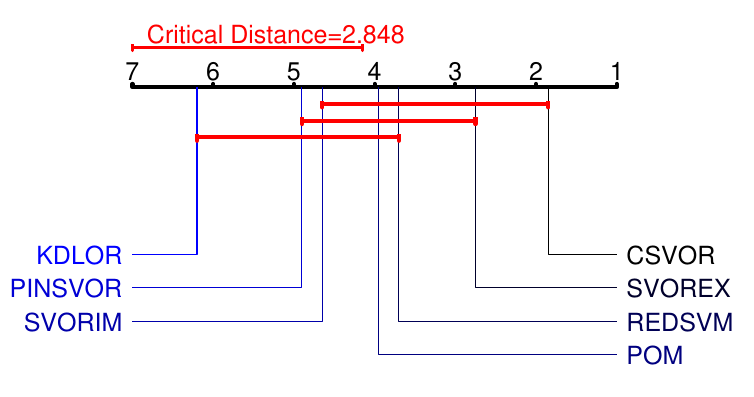}
  }
  \subfigure[5\% outliers]{
    \includegraphics[width=0.3\textwidth]{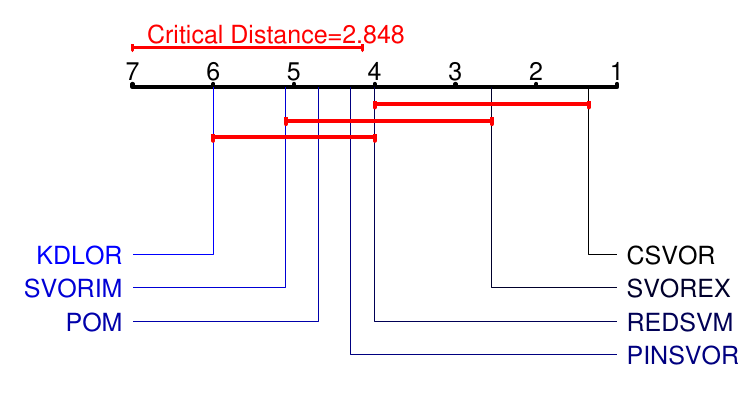}
  }

  \subfigure[10\% outliers ]{
    \includegraphics[width=0.3\textwidth]{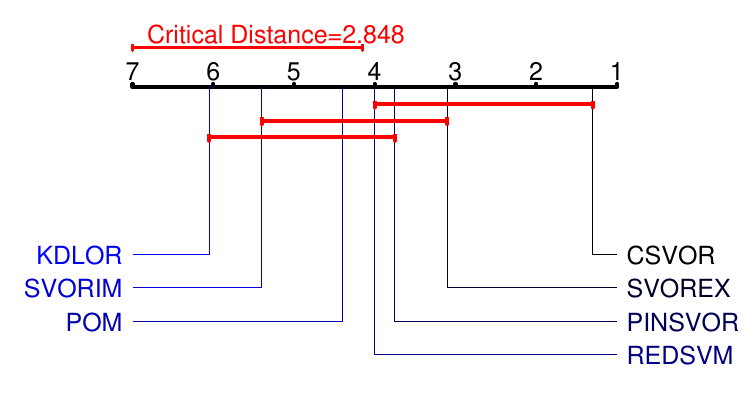}
  }
  \subfigure[15\% outliers ]{
    \includegraphics[width=0.3\textwidth]{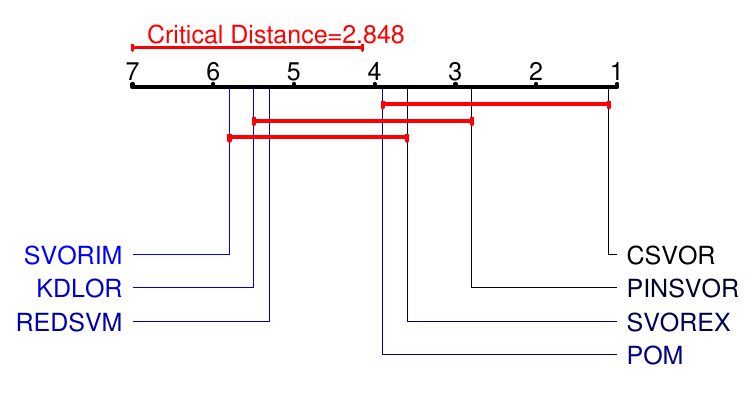}
  }
   \subfigure[20\% outliers ]{
    \includegraphics[width=0.3\textwidth]{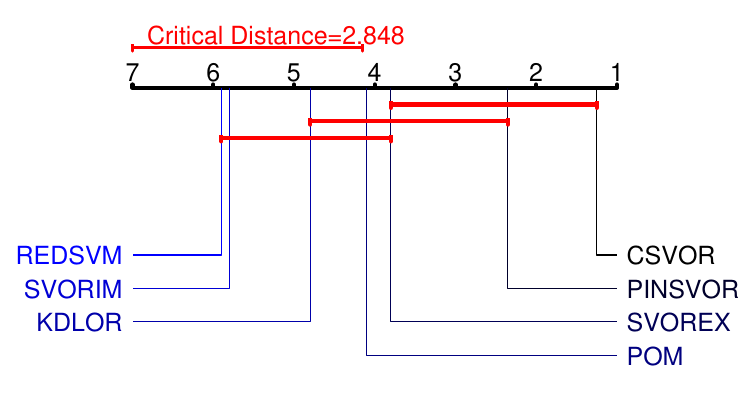}
  }
  \caption{The CD diagrams of the Friedman test with Nemenyi post-hoc test at the 5\% significance level. }
  \label{CDF}
\end{figure*}

\begin{figure*}[htbp]
  \centering

  \subfigure[Contact-Lenses]{
    \includegraphics[width=0.3\textwidth]{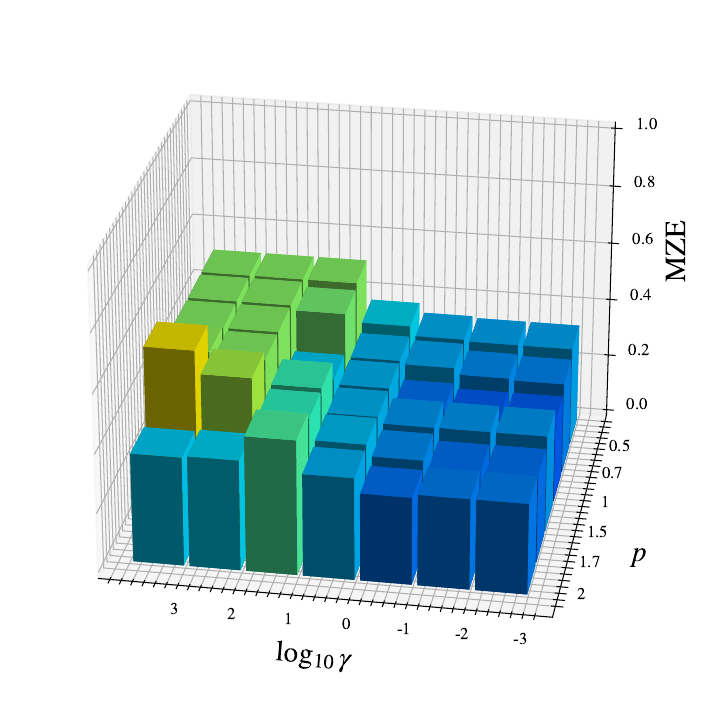}
 }
  \subfigure[Squash-Unstored]{
    \includegraphics[width=0.3\textwidth]{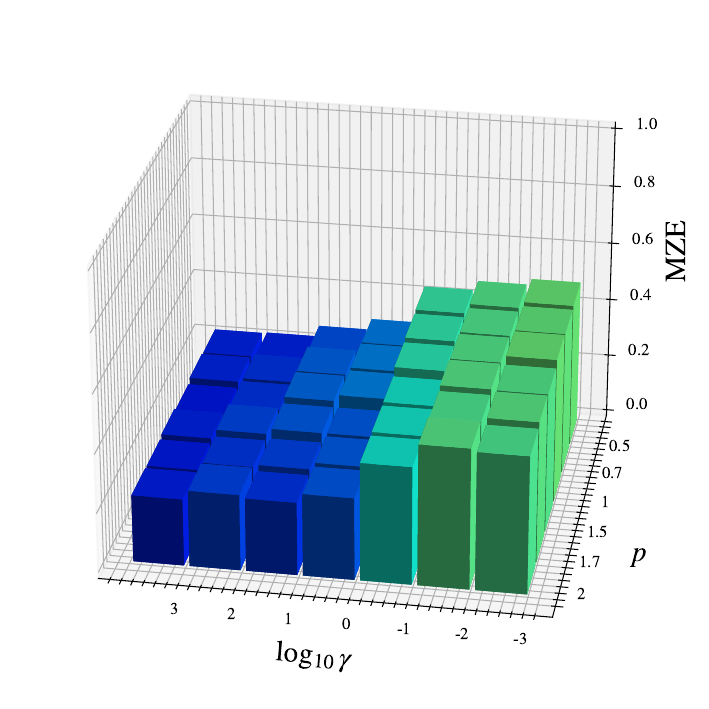}
  }
  \subfigure[Tae]{
    \includegraphics[width=0.3\textwidth]{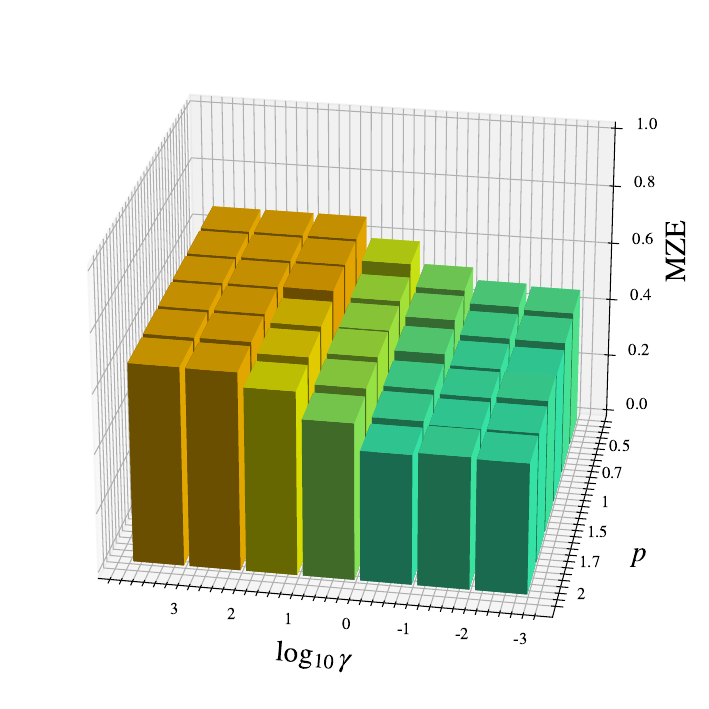}
  }

  \subfigure[Newthyroid]{
    \includegraphics[width=0.3\textwidth]{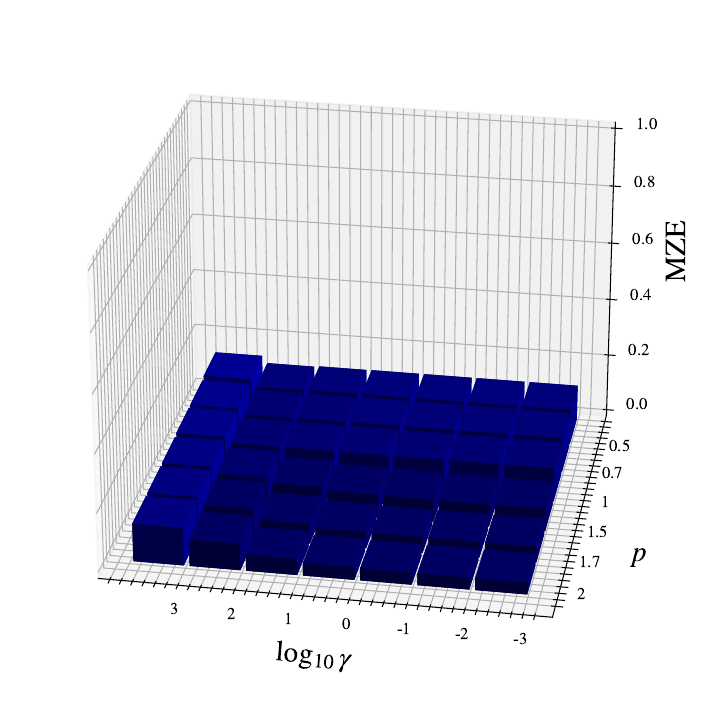}
 }
  \subfigure[Machine]{
    \includegraphics[width=0.3\textwidth]{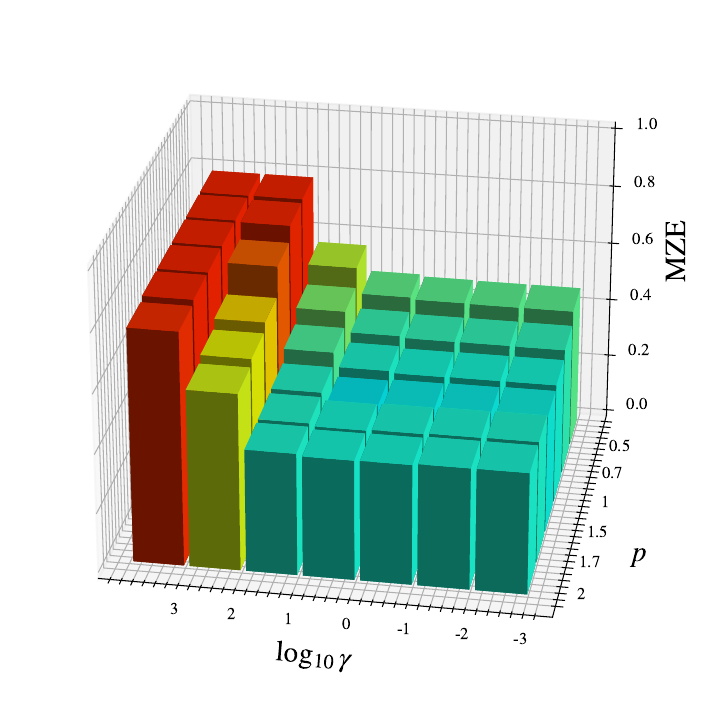}
  }
  \subfigure[Housing]{
    \includegraphics[width=0.3\textwidth]{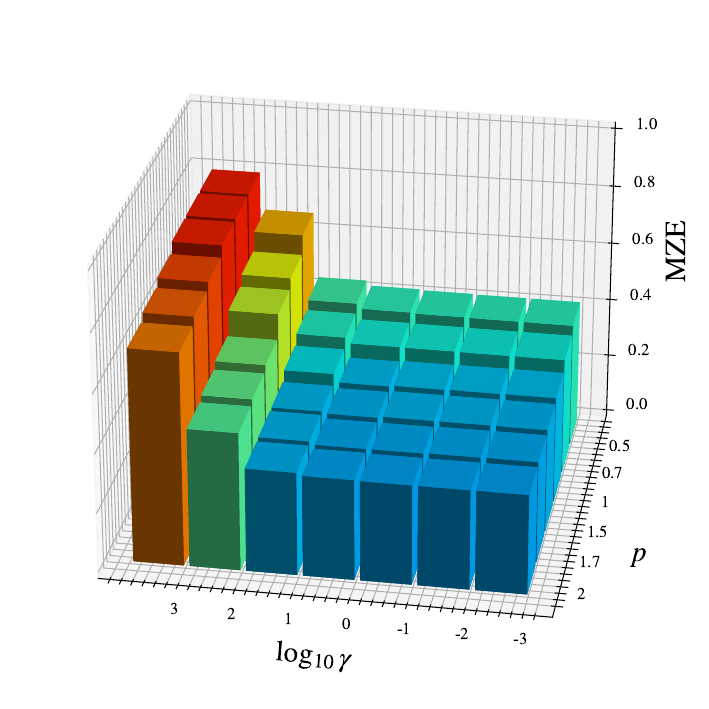}
  }

  \subfigure[Stock]{
    \includegraphics[width=0.3\textwidth]{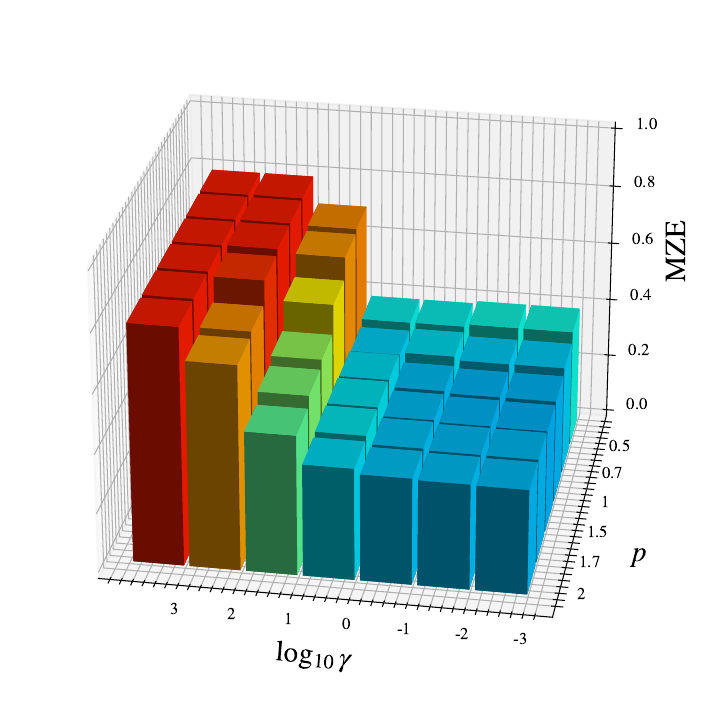}
  }
  \subfigure[Bank]{
    \includegraphics[width=0.3\textwidth]{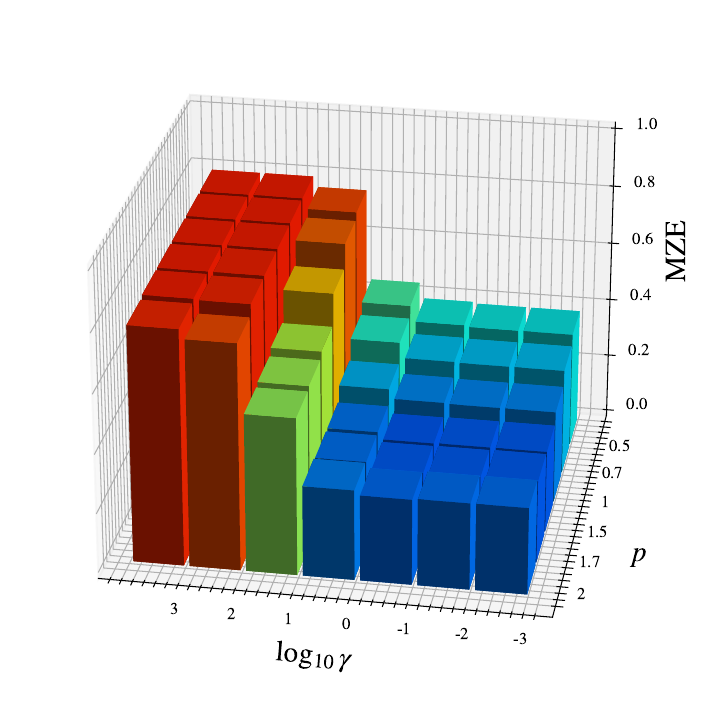}
  }
   \subfigure[Cal.housing ]{
    \includegraphics[width=0.3\textwidth]{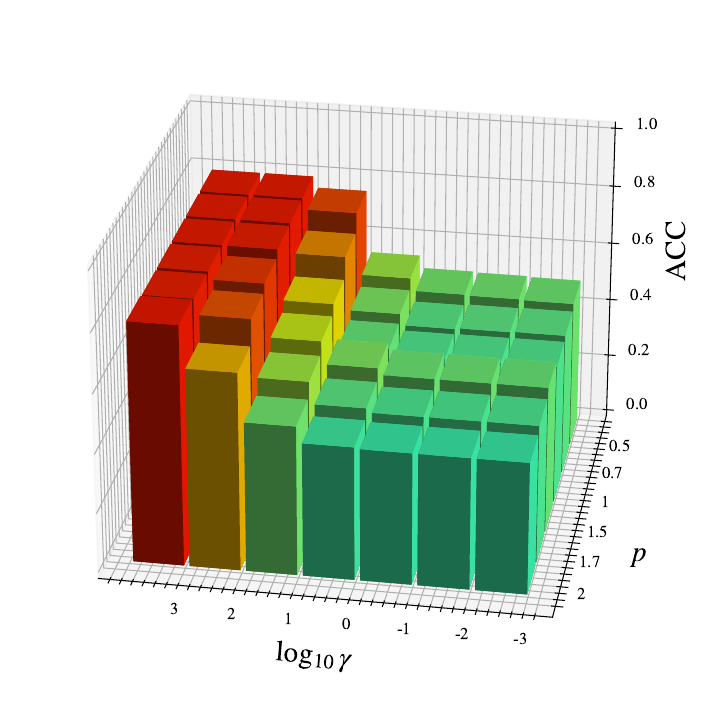}
  }
  \caption{MZE with respect to parameters $p$ and $\gamma$ on nine datasets.}
  \label{paramF}
\end{figure*}

\begin{figure}[htbp]
  \centering
  \subfigure[Contact-Lenses]{
    \includegraphics[width=0.3\textwidth]{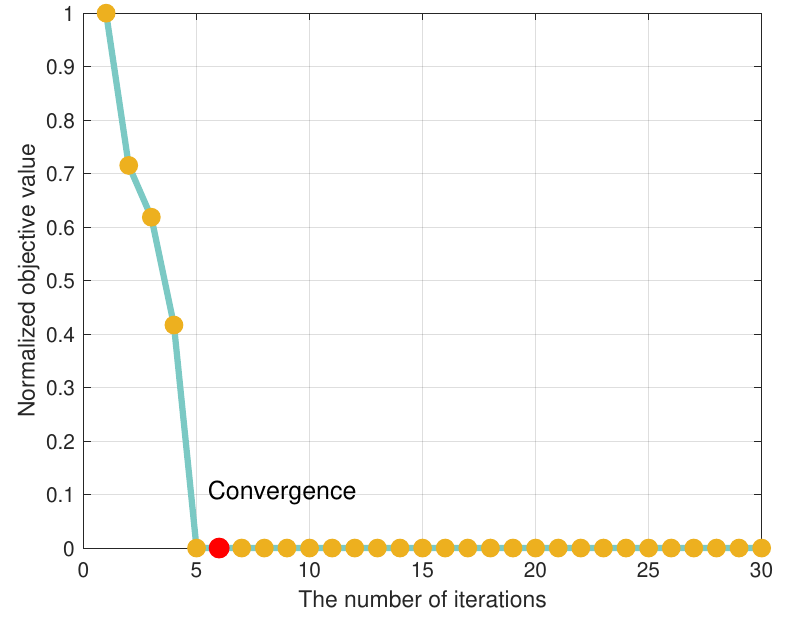}
  }
  \subfigure[Tae]{
    \includegraphics[width=0.3\textwidth]{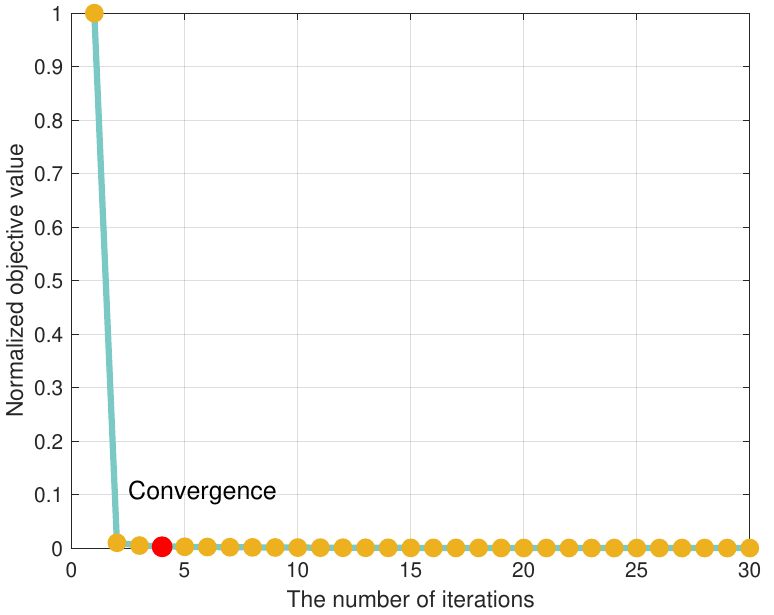}
  }
  \subfigure[Newthyroid]{
    \includegraphics[width=0.3\textwidth]{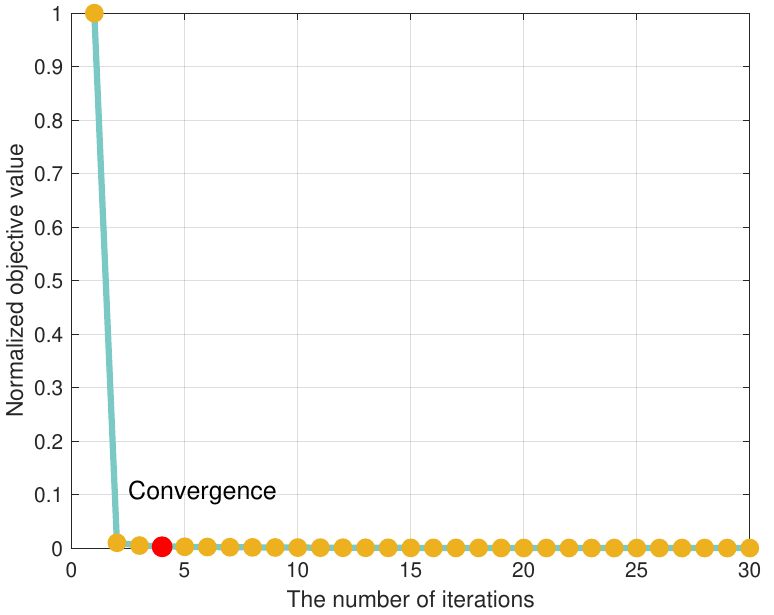}
  }

  \subfigure[Machine]{
    \includegraphics[width=0.3\textwidth]{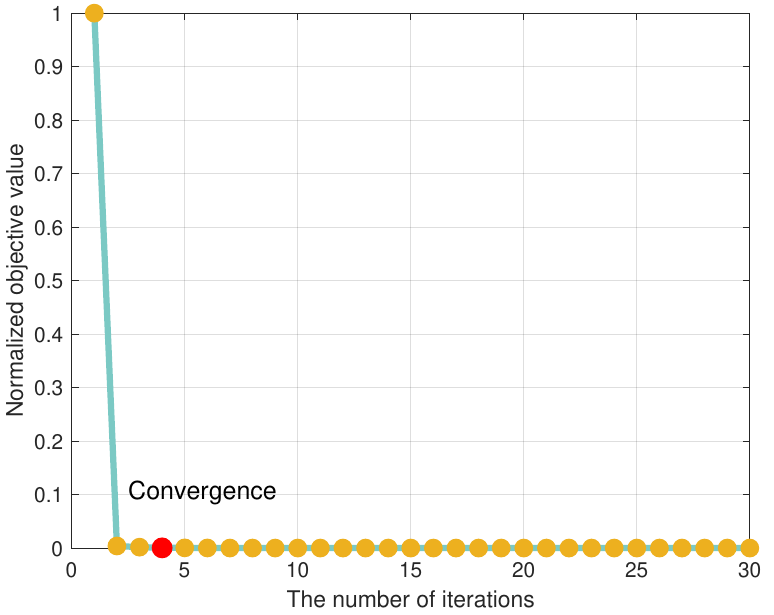}
  }
  \subfigure[Housing]{
    \includegraphics[width=0.3\textwidth]{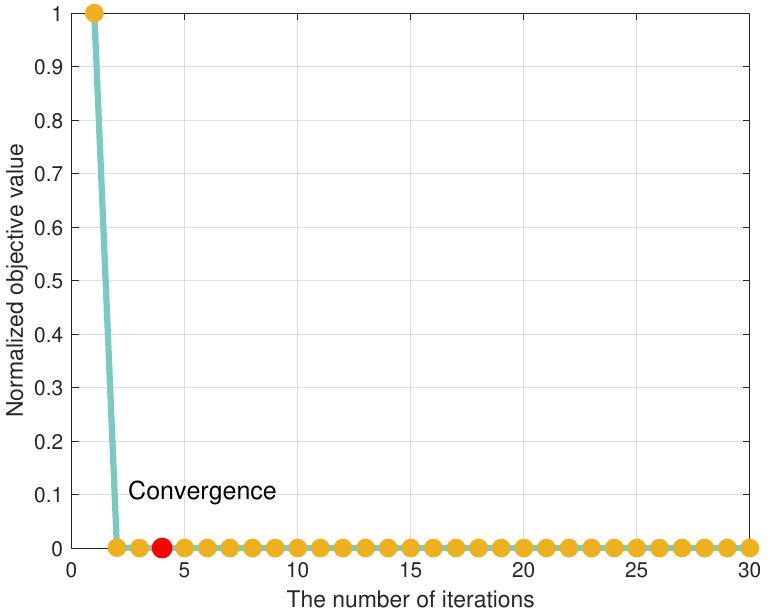}
  }
  \subfigure[Stock]{
    \includegraphics[width=0.3\textwidth]{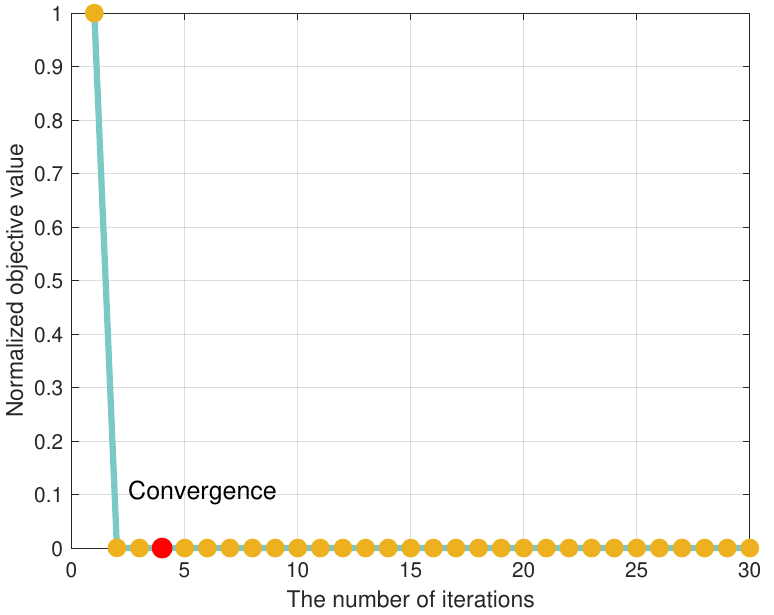}
  }

  \subfigure[Bank]{
    \includegraphics[width=0.3\textwidth]{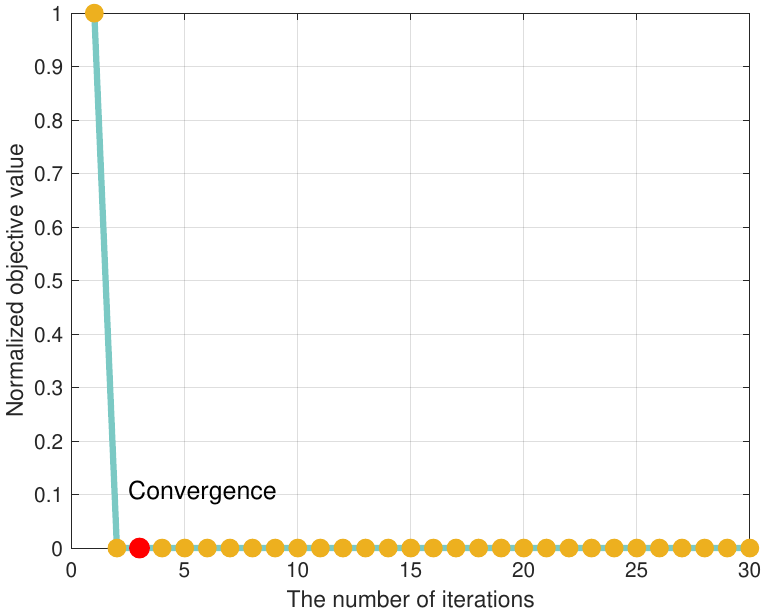}
  }
  \subfigure[Computer]{
    \includegraphics[width=0.3\textwidth]{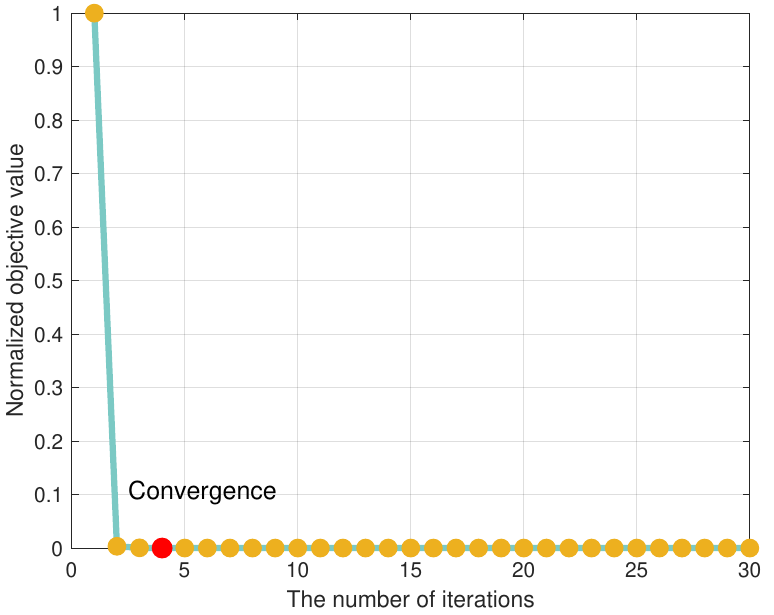}
  }
  \subfigure[Cal.housing]{
    \includegraphics[width=0.3\textwidth]{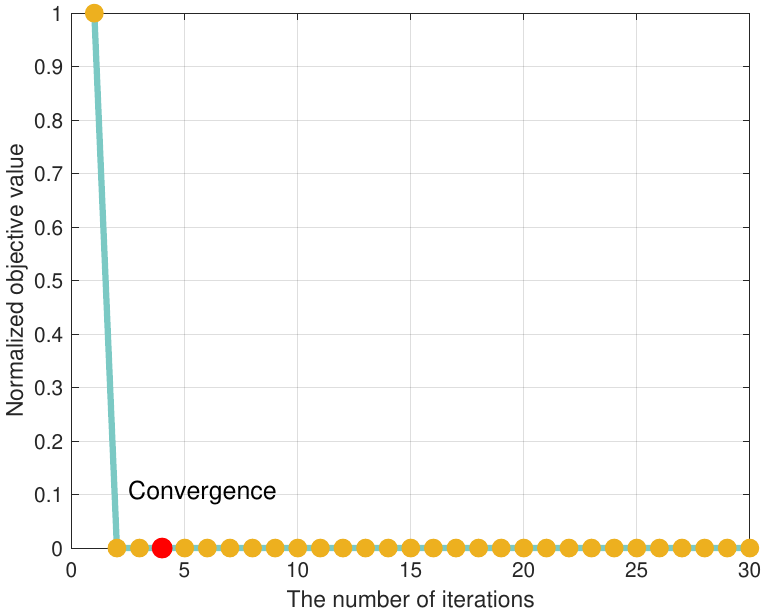}
  }
  \caption{Convergence curves of CSVOR on nine real-world
datasets.}
  \label{convergeF}
\end{figure}

\subsection{Visual experiment}

\begin{figure}
    \centering
    \includegraphics[scale=0.5]{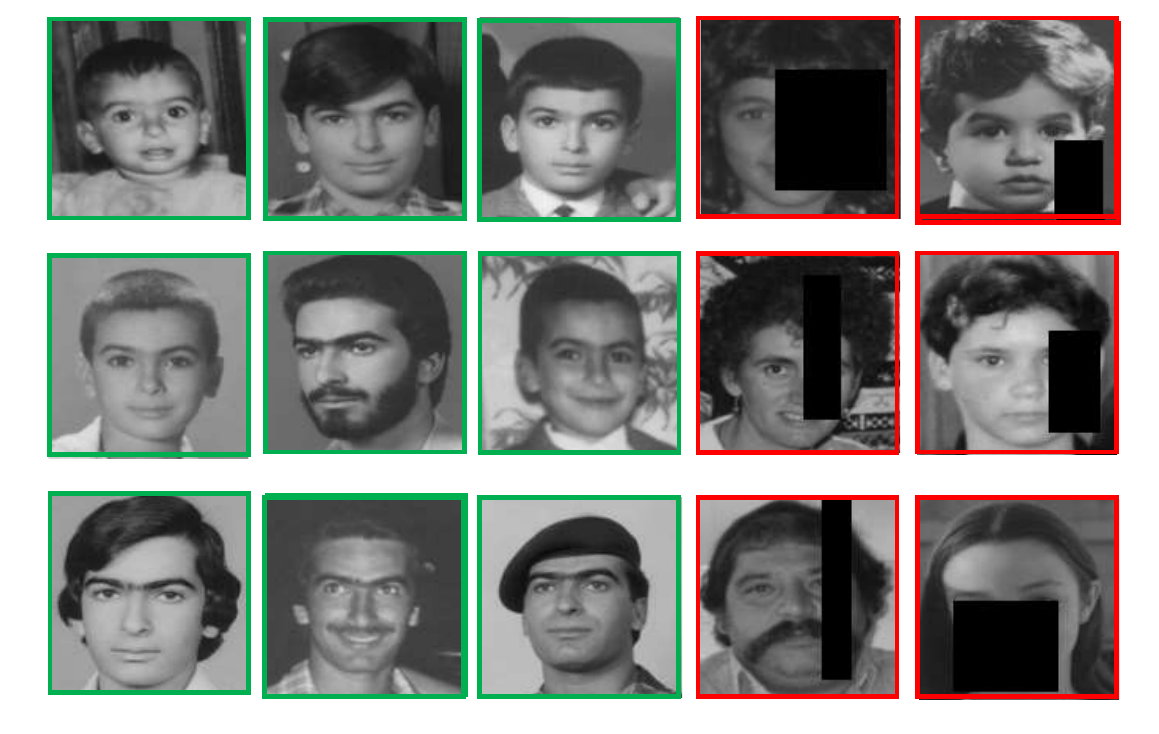}
    \caption{Visualized experimental results on the FG-NET dataset, the first three columns are normal samples, and the last two columns are abnormal samples with black occlusion blocks added.}
    \label{V1}
\end{figure}

In Section 5.2, we provide an intuitive explanation of why CSVOR is robust to outliers. 
CSVOR utilizes a weight matrix $D$ to detect and  remove outliers implicitly. This matrix marks detected outliers as 0, effectively removing them from consideration during training process.
The ability of the weight matrix $D$ to identify anomalies directly influences the performance of CSVOR. To illustrate this capability,
we conduct visual experiments on the FG-NET dataset, 
widely used for age estimation. 
This dataset comprises 1002 images of 82 individuals, 
ranging in age from 0 to 69 years old \cite{lanitis2002toward}. For the experiment, we randomly select 10\% of the data and introduce black occlusion blocks of varying sizes and positions. 
Fig. \ref{V1} presents the visualization results, where red boxes indicate abnormal points detected by the model, and green boxes represent normal points. 
The model successfully identifies images with black occlusion patches as outliers, supporting our previous findings and highlighting the effectiveness of the weight matrix $D$ in anomaly detection.
\subsection{Parameters Sensitivity}
In this subsection, we study the sensitivity of the parameters $p$ and $\gamma$ in CSVOR. Fig. \ref{paramF} shows the Mean Zero-One Error (MZE) on the test sets corresponding to different parameter combinations on nine datasets. (Due to space limitations, we only show the results on nine datasets.) We observe that different parameter combinations have different effects on the performance of the model, and the optimal parameter combinations corresponding to different datasets are different. Therefore, we recommend using five fold cross-validation to select appropriate parameters.

\subsection{Convergence Analysis}
To address the optimization problem involved in CSVOR, we propose an efficient optimization algorithm based on the Re-Weighted algorithm. The corresponding convergence analysis is given in Section 5.1.
Fig. \ref{convergeF} displays the convergence curve of our algorithm on nine datasets. It is evident that CSVOR converges quickly, typically within 10 iterations, across all datasets.
The rapid convergence of CSVOR, as illustrated in the convergence curves, highlights the efficiency and effectiveness of our optimization algorithm.

\section{CONCLUSION}
In this paper, we introduce CSVOR, a robust ordinal regression model that utilizes a novel  capped $\ell_p$-norm loss function designed to handle both light and heavy outliers. We provide an intuitive explanation for the robustness of CSVOR to outliers
CSVOR uses a  weight matrix $D$ to identify and eliminate the outliers implicitly during training process.
To solve the optimization problem associated with CSVOR, we propose an effective Re-Weighted optimization algorithm, supported by theoretical results demonstrating its convergence. Extensive experiments validate the effectiveness of CSVOR, showcasing its robust performance across various datasets.

Further research is needed to enhance our proposed model. As our current model is linear, its ability to fit complex functions is limited. Therefore, our future research will concentrate on developing nonlinear versions of our model. Additionally,  any unbounded loss may be affected by outliers.
Although our focus in this paper is on the ordinal hinge loss, the concepts presented herein can be extended to other ordinal regression losses. We plan to explore the application of capped $\ell_p$-norm to other ordinal losses, such as ordinal exponential loss and ordinal logistic loss, in future research.




 \appendix
 \bibliographystyle{elsarticle-num} 
\biboptions{sort&compress}
  \bibliography{elsarticletemplatenum.bib}


\end{document}